\DeclareMathOperator{\KL}{KL}
\DeclareMathOperator{\Bern}{Bern}
\theoremstyle{plain}
\newtheorem{theorem}{Theorem}[section]
\newtheorem{proposition}[theorem]{Proposition}
\newtheorem{lemma}[theorem]{Lemma}
\newtheorem{corollary}[theorem]{Corollary}
\theoremstyle{definition}
\theoremstyle{remark}
\icmltitlerunning{Feasibility Limits of Human Preference Evaluation}
\begin{document}

\twocolumn[
  \icmltitle{How Many Human Judgments Are Enough? \\ Feasibility Limits of Human Preference Evaluation}

  \icmlsetsymbol{equal}{*}

  \begin{icmlauthorlist}
    \icmlauthor{Wilson Y. Lee}{equal,yyy}
  \end{icmlauthorlist}

  \icmlaffiliation{yyy}{Independent Researcher}

  \icmlcorrespondingauthor{Wilson Y. Lee}{wilson.yenhsun.lee@gmail.com}

  \icmlkeywords{Human evaluation, Preference evaluation, Statistical power analysis, LLM evaluation, Sample size estimation, Benchmark design, Chatbot Arena}

  \vskip 0.3in
]



\printAffiliationsAndNotice{}  

\begin{abstract}
Human preference evaluations are widely used to compare generative models, yet it remains unclear how many judgments are required to reliably detect small improvements. We show that when preference signal is diffuse across prompts (i.e., all prompt types are similarly informative), proportional allocation is minimax-optimal: no allocation strategy substantially improves detectability. Empirical analysis of large-scale human preference datasets shows that 
most comparisons fall into this diffuse regime, exhibiting small preference margins that require far more judgments than typically collected, even in well-sampled comparisons. These limits persist across evaluation protocols and modalities, including chat, image generation, and code generation with execution feedback. In contrast, curated benchmarks that reduce prompt-induced variability systematically induce larger margins and improve detectability through a $1.5\times$ reduction in prompt-level variance. Our results show that inconclusive or negative human evaluation outcomes frequently reflect underpowered evaluation rather than model equivalence, underscoring the need to account explicitly for effect size, budget, and protocol design.
\end{abstract}

\section{Introduction}

Human preference studies are widely used to evaluate generative models, particularly for open-ended tasks where automated metrics poorly align with perceived quality \citep{gehrmann2021gem, chiang2024chatbot, zheng2023judging, jiang2025artificial}. Pairwise human judgments directly compare models and are central to modern evaluation pipelines, including public leaderboards and curated benchmarks. However, such evaluations are expensive, slow, and often inconclusive, especially as model improvements become incremental. As a result, practitioners routinely face a practical question: “how many human judgments are required to reliably detect a meaningful improvement?”

Prior work has noted that human evaluation results are often underpowered or difficult to interpret when effect sizes are small \citep{gehrmann2021gem, dror2018hitchhiker, card2020little, elangovan2024considers, miller2024addingerrorbarsevals}, even under careful uncertainty estimation. Classical power analysis assumes a single homogeneous effect size and i.i.d.\ samples \citep{lehmann2005testing, casella2002statistical}, which are violated by human preference evaluation where effect sizes vary across prompts \citep{zheng2023judging}. This motivates a feasibility analysis linking evaluation budget, effect size, and detectability under heterogeneity.

\begin{figure}[t]
  \centering
  \includegraphics[width=0.95\linewidth]
  {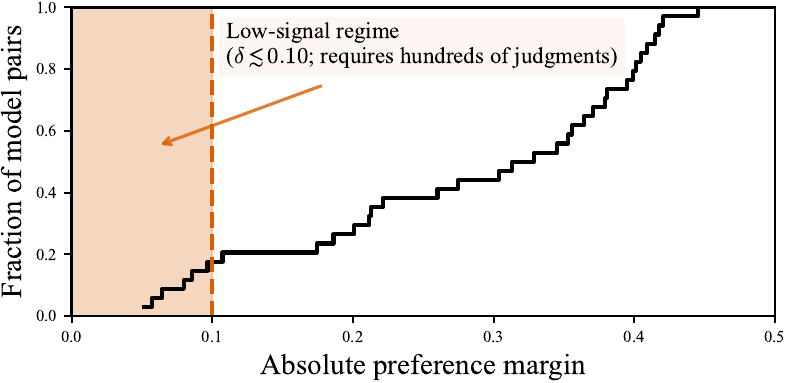}
  \caption{Distribution of empirical absolute preference margins $|\delta|$ for well-sampled Chatbot Arena model pairs ($\geq$200 decisive judgments). A substantial fraction lie in a low-signal regime ($|\delta| \lesssim 0.10$), where reliable detection requires hundreds of judgments.}
  \label{fig:arena_margins}
\end{figure}

In this work, we study human preference evaluation through the lens of statistical hypothesis testing under heterogeneous prompts and finite budgets. We show that reliable detection is governed by the total KL divergence accumulated across judgments, and that evaluation protocols  determine how this budget is distributed across prompt types. This perspective explains why open-ended evaluations such as Chatbot Arena~\citep{chiang2024chatbot} often struggle to detect small improvements despite large sample sizes, and clarifies when protocol design can (and cannot) alter feasibility limits.

\noindent\textbf{Our contributions.} We characterize feasibility in human preference evaluation under prompt-induced heterogeneity, addressing three questions: (1) In what regimes is reliable detection intrinsically difficult? (2) Can adaptive prompt allocation improve detectability, holding the protocol fixed? (3) How does protocol design reshape feasibility limits?

Our main theoretical result answers (2) negatively. When preference signal is diffuse across prompts (e.g., open-ended evaluation like Chatbot Arena), proportional allocation is minimax-optimal. In other words, no allocation strategy can rescue underpowered comparisons when the fundamental issue is insufficient total signal. However, when signal is concentrated in a small subset of prompt types, two-stage screening can provide measurable gains.

Empirically, we ground this framework using Chatbot Arena and MT-Bench~\citep{zheng2023judging}. Many well-sampled Arena comparisons lie in low-signal regimes requiring far more judgments than typically collected: 17.6\% of model pairs with $\geq$200 judgments have preference margins $|\delta| \leq 0.10$, implying $\geq$500 judgments for 90\% detection power (Figure~\ref{fig:arena_margins}). MT-Bench achieves higher detectability through a $1.5\times$ reduction in prompt-level variance---providing a mechanistic account of why protocol design, not annotation volume, determines which comparisons are resolvable.

\section{Problem Setup and Definitions}
\label{sec:setup}

We consider the problem of comparing two generative models, denoted $A$ and $B$, using pairwise human preferences. Given a prompt sampled from a fixed distribution and responses generated by both models, a human evaluator is asked to indicate which response they prefer, or whether the responses are tied.

Throughout this work, we focus on \emph{decisive} comparisons and discard ties. Each human judgment is therefore represented as a binary random variable
\begin{equation}
Y = \mathbf{1}\!\left[\text{human prefers model } B \text{ over model } A\right].
\end{equation}

where $Y=1$ indicates a preference for $B$ and $Y=0$ indicates a
preference for $A$. We define
\begin{equation}
p = \Pr(Y=1),
\end{equation}
the probability that a randomly sampled evaluator prefers $B$ over $A$ under the given evaluation protocol.

\textbf{Ties and best-case analysis.}
Human evaluators may also report ties, indicating no clear preference between the two responses. Throughout this work, we discard ties and condition on decisive outcomes in order to obtain a binary preference signal. This choice corresponds to a \emph{best-case} analysis. Conditioning on decisive judgments maximizes the per-judgment directional information available for distinguishing $p \neq 1/2$. Incorporating ties via a multinomial or ordinal preference model can only reduce the effective information per judgment; therefore, it cannot improve detectability. We empirically verify robustness to alternative tie encodings in Appendix~\ref{app:ties}.

\subsection{Preference Margin as Effect Size}

The central quantity governing detectability in pairwise human evaluation is the magnitude of the preference margin. We define the preference margin as
\begin{equation}
\delta = p - \tfrac{1}{2},
\end{equation}
which measures the deviation from indifference. Larger values of $|\delta|$ correspond to differences that are easier for human evaluators to perceive, while values of $|\delta|$ close to zero correspond to subtle qualitative differences.

From a statistical perspective, detecting an improvement reduces to distinguishing the null hypothesis $p = 0.5$ from the alternative $p \neq 0.5$ given finite noisy observations. The fundamental quantity governing detectability is the KL divergence between these hypotheses. For Bernoulli preferences near $p = 0.5$, the per-judgment KL divergence scales as $(p - \tfrac12)^2$ up to constants, so empirical preference margins serve as observable proxies for the KL budget accumulated by an evaluation protocol.

\subsection{Near-Tie (Low-Signal) Comparisons}

We refer to comparisons with small absolute preference margins as \emph{near-tie} or \emph{low-signal} comparisons. Formally, for a chosen threshold $\tau \in (0, 0.5)$, we define the near-tie subset as
\begin{equation}
|\hat{p} - \tfrac{1}{2}| < \tau,
\label{eq:neartie}
\end{equation}
where $\hat{p}$ is the empirical estimate of $p$ from observed judgments. Rather than treating $\tau$ as a universal constant, we treat it as a sensitivity parameter and report results across multiple thresholds where appropriate.

We frame human evaluation as a hypothesis-testing problem,
\begin{equation}
H_0: p = 0.5 \quad \text{vs.} \quad H_1: p \neq 0.5,
\end{equation}
and study detectability as a function of evaluation budget. This framing clarifies that inconclusive human evaluations are not evidence of model equivalence—they may simply reflect insufficient budget relative to the underlying effect size. With $n$ independent judgments, standard binomial tests imply that rejection probability depends on $n$, the significance level $\alpha$, and the preference margin $|\delta|$.

Rather than fixing $n$, we characterize \emph{detectability curves} mapping budget to detection probability. These curves make explicit the tradeoff between effect size and sample size and clarify the interpretation of inconclusive results: when $|\delta|$ is small, detectability saturates and additional budget yields diminishing returns.

\section{Theory: Feasibility Limits and Their Consequences}
\label{sec:kl_theory}
Standard power analysis assumes homogeneous effect size $\delta$; the $\delta^{-2}$ sample complexity scaling is classical. Our contribution is characterizing when prompt-induced heterogeneity permits adaptive gains and when it does not. We show that proportional allocation is minimax-optimal in diffuse regimes (Theorem~\ref{thm:minimax-allocation}), while two-stage screening helps only under sufficient signal concentration (Theorem~\ref{thm:stochastic-adaptive}). Section~\ref{sec:implementation-guide} translates these results into practical guidance.

\subsection{Setup}
\label{sec:theory_setup}
We study hypothesis testing under prompt-induced heterogeneity, where individual judgments carry unequal statistical information. At round $t$,
\begin{equation}
Y_t \sim \mathrm{Bernoulli}\!\left(\tfrac12+\delta_t\right), \qquad \delta_t\in[-\tfrac14,\tfrac14],
\label{eq:kl_model}
\end{equation}
where conditional on $\delta_{1:B}$ the observations are independent.
Let $P_{\delta}$ denote the joint law under $\delta_{1:B}$ and $P_0$ the null law
$\delta_t\equiv 0$.

We define the \emph{KL budget} of an evaluation stream as
\begin{equation}
\begin{aligned}
\mathsf{K}(\delta_{1:B})
&:= \mathrm{KL}(P_{\delta}\,\|\,P_0) \\
&= \sum_{t=1}^B
\mathrm{KL}\!\left(
\mathrm{Bern}\!\left(\tfrac12+\delta_t\right)
\,\middle\|\,
\mathrm{Bern}\!\left(\tfrac12\right)
\right).
\end{aligned}
\label{eq:kl_budget}
\end{equation}

All feasibility bounds in Section~\ref{sec:kl_theory} should be interpreted as best-case limits: any dependence between judgments can only reduce effective sample size and make  detection harder.

\begin{lemma}[Quadratic KL scaling]
\label{lem:kl_quadratic}
If $|\delta|\le\tfrac14$, then
\[
2\,\delta^2
\;\le\;
\KL\!\left(\Bern(\tfrac12+\delta)\,\middle\|\,\Bern(\tfrac12)\right)
\;\le\;
\tfrac94\,\delta^2,
\]
and consequently
\[
2\sum_{t=1}^B\delta_t^2
\;\le\;
\mathsf{K}(\delta_{1:B})
\;\le\;
\tfrac94\sum_{t=1}^B\delta_t^2 .
\]
\end{lemma}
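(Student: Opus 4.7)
The plan is to reduce the per-round bound to a power-series computation and then obtain the aggregate bound by summation. Writing $f(\delta) := \KL(\Bern(\tfrac12+\delta)\,\|\,\Bern(\tfrac12))$, I would first symmetrize as
\[
f(\delta) \;=\; \tfrac12\log(1-4\delta^2) \;+\; \delta\log\tfrac{1+2\delta}{1-2\delta},
\]
which cleanly separates the even and odd contributions. Expanding both logarithms via Taylor series, which are absolutely convergent on $|\delta|\le 1/4 < 1/2$, and collecting like powers of $\delta$, the odd powers cancel and I expect to obtain
\[
f(\delta) \;=\; \sum_{k\ge 1} \frac{4^{k}\,\delta^{2k}}{2k(2k-1)},
\]
a series with strictly positive coefficients whose leading term is exactly $2\delta^2$.

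The lower bound $f(\delta) \ge 2\delta^2$ is then immediate: the $k=1$ contribution equals $2\delta^2$, and every remaining term is nonnegative. This is also consistent with Pinsker's inequality, since $\mathrm{TV}(\Bern(\tfrac12+\delta), \Bern(\tfrac12)) = |\delta|$, providing an independent sanity check on the constant.

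For the upper bound, I would bound the $k\ge 2$ tail geometrically. Since $2k(2k-1)\ge 12$ for $k\ge 2$, the tail is at most $\tfrac1{12}\sum_{k\ge 2}(4\delta^2)^k$. Summing this geometric series and invoking $4\delta^2 \le 1/4$ (so $1-4\delta^2 \ge 3/4$) yields a tail bound of $\tfrac{16}{9}\delta^4$. The constraint $\delta^2 \le 1/16$ then gives $\tfrac{16}{9}\delta^4 \le \tfrac19\delta^2$, so $f(\delta) \le \bigl(2+\tfrac19\bigr)\delta^2 = \tfrac{19}{9}\delta^2 \le \tfrac94\delta^2$, with room to spare. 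The aggregate inequalities for $\mathsf{K}(\delta_{1:B}) = \sum_{t} f(\delta_t)$ follow by summing the per-round bounds; this is valid because the joint KL under conditional independence tensorizes as a sum over rounds.

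The only real subtlety is the bookkeeping in the series derivation---verifying that odd powers cancel and that the tail estimate yields a constant below $9/4$---but both steps are routine over the compact domain $|\delta| \le 1/4$, so I anticipate no substantive obstacle.
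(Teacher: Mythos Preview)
Your argument is correct. The series identity
\[
f(\delta)=\sum_{k\ge 1}\frac{4^{k}\delta^{2k}}{2k(2k-1)}
\]
follows exactly as you describe from the decomposition $f(\delta)=\tfrac12\log(1-4\delta^2)+\delta\log\frac{1+2\delta}{1-2\delta}$, the lower bound is just the $k=1$ term (and indeed matches Pinsker with $\mathrm{TV}=|\delta|$), and your geometric tail bound gives the sharper constant $19/9<9/4$. The sum for $\mathsf K(\delta_{1:B})$ is immediate from the factorization of KL under conditional independence, which the paper records in its preliminaries.

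The paper itself does not supply a proof of this lemma; it is stated as an elementary fact and used freely thereafter. So your write-up is not merely a different route---it fills in what the paper leaves implicit. If anything, you could shorten it: the lower bound can be stated directly as Pinsker (no series needed), and for the upper bound a second-order Taylor remainder on $[\tfrac14,\tfrac34]$ would also work, but your power-series computation is clean and makes the constants transparent.
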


\subsection{Minimax feasibility under adversarial heterogeneity}
We test $H_0:P_0$ versus the composite alternative
\[
H_1:\ \mathsf{K}(\delta_{1:B}) \ge \mathsf{K},
\]
for a target $\mathsf{K}>0$. A test $\phi$ has type-I error
$\alpha(\phi)=\Pr_0(\phi=1)$ and worst-case type-II error
$\beta(\phi)=\sup_{\mathsf{K}(\delta)\ge \mathsf{K}}\Pr_{\delta}(\phi=0)$.

\begin{theorem}[Minimax lower bound]
\label{thm:kl_lower_main}
For any test $\phi$ and any $\mathsf{K}>0$,
\[
\sup_{\mathsf{K}(\delta)\ge \mathsf{K}}
\big(\Pr_0[\phi=1]+\Pr_{\delta}[\phi=0]\big)
\;\ge\; \tfrac12 e^{-\mathsf{K}}.
\]
Thus, achieving total error $\le \alpha+\beta$ requires
$\mathsf{K}\gtrsim \log\!\frac{1}{\alpha+\beta}$.
\end{theorem}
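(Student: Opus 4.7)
My plan is to reduce the composite minimax problem to a simple-vs-simple two-point test and then invoke the Bretagnolle--Huber inequality, the canonical tool for converting a KL bound into a lower bound on the sum of type-I and type-II errors.

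The first step is to bound the composite supremum by a single witness. I would pick $\delta^\star$ with $\mathsf{K}(\delta^\star) = \mathsf{K}$ exactly; by Lemma~\ref{lem:kl_quadratic} together with continuity of $\mathsf{K}(\cdot)$ in its arguments, such a $\delta^\star$ exists provided $\mathsf{K}$ lies in the feasible range (e.g.\ by scaling a single coordinate within $|\delta_t|\le \tfrac14$; outside the feasible range the alternative is empty and the bound is vacuous). Then
$$\sup_{\mathsf{K}(\delta)\ge\mathsf{K}}\bigl(\Pr_0[\phi=1]+\Pr_\delta[\phi=0]\bigr)\;\ge\;\Pr_0[\phi=1]+\Pr_{\delta^\star}[\phi=0],$$
reducing the task to bounding the error sum between the two fixed laws $P_0$ and $P_{\delta^\star}$.

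The second step couples this error sum to the KL budget. I would use the classical identity $\Pr_0[\phi=1]+\Pr_{\delta^\star}[\phi=0]\ge 1-\|P_0-P_{\delta^\star}\|_{TV}$, and then apply Bretagnolle--Huber in the form $\|P-Q\|_{TV}\le\sqrt{1-e^{-\mathrm{KL}(Q\|P)}}$ with $Q=P_{\delta^\star}$, $P=P_0$. Since the joint KL factorizes over rounds by the conditional independence assumption below Eq.~\eqref{eq:kl_model}, we have $\mathrm{KL}(P_{\delta^\star}\|P_0)=\mathsf{K}(\delta^\star)=\mathsf{K}$ by the definition of the budget in Eq.~\eqref{eq:kl_budget}, yielding
$$\Pr_0[\phi=1]+\Pr_{\delta^\star}[\phi=0]\;\ge\;1-\sqrt{1-e^{-\mathsf{K}}}.$$
Finishing with the elementary scalar bound $1-\sqrt{1-u}=u/(1+\sqrt{1-u})\ge u/2$ applied at $u=e^{-\mathsf{K}}$ gives the stated $\tfrac12 e^{-\mathsf{K}}$ constant. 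The corollary ``$\mathsf{K}\gtrsim\log(1/(\alpha+\beta))$'' then follows immediately by setting the sum of errors equal to $\alpha+\beta$ and taking logs.

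\textbf{Main obstacle.} There is no real analytical difficulty here; both Bretagnolle--Huber and the concluding scalar inequality are textbook. The two items that require genuine care are (i) choosing the correct direction of KL in Bretagnolle--Huber, since the budget is defined as $\mathrm{KL}(P_\delta\|P_0)$ rather than the reverse, and (ii) verifying that additivity across rounds holds---this uses the conditional-independence-given-$\delta_{1:B}$ assumption and should be stated explicitly, since any dependence between judgments would break the factorization and the bound would need to be reinterpreted as a best-case limit, consistent with the remark following Eq.~\eqref{eq:kl_budget}.
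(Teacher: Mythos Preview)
Your proposal is correct and follows the same route as the paper: reduce to a simple alternative and apply Bretagnolle--Huber. Your care in choosing a witness $\delta^\star$ with $\mathsf{K}(\delta^\star)=\mathsf{K}$ exactly is in fact necessary (since $\mathrm{KL}\ge\mathsf{K}$ alone gives the wrong monotonicity for $e^{-\mathrm{KL}}$), and the detour through the $\sqrt{1-e^{-\mathsf{K}}}$ form is harmless but unnecessary, as the direct Bretagnolle--Huber bound $1-\|P-Q\|_{TV}\ge\tfrac12 e^{-\mathrm{KL}(Q\|P)}$ already yields the constant.
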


\begin{theorem}[Matching upper bound]
\label{thm:kl_upper_main}
There exist universal constants $c,C>0$ such that for any $\mathsf{K}>0$
there exists a test $\phi_{\mathsf{K}}$ satisfying
\[
\sup_{\mathsf{K}(\delta)\ge C\mathsf{K}}
\big(\Pr_0[\phi_{\mathsf{K}}=1]+\Pr_{\delta}[\phi_{\mathsf{K}}=0]\big)
\;\le\; e^{-c\mathsf{K}}.
\]
\end{theorem}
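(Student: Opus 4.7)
The plan is to construct $\phi_{\mathsf{K}}$ as a Neyman--Pearson likelihood-ratio test and bound its error via a Chernoff-type inequality that factorizes across rounds, then use Lemma~\ref{lem:kl_quadratic} to translate the quadratic sum $\sum_t \delta_t^2$ into the KL budget $\mathsf{K}(\delta)$ in the exponent.

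First, for each fixed $\delta$ in the alternative set, I would analyze the likelihood-ratio test $\phi_\delta(Y) = \mathbf{1}\{(dP_\delta/dP_0)(Y) \ge 1\}$. A symmetric Chernoff/Markov argument applied to both error types gives
\[
\Pr_0[\phi_\delta=1] + \Pr_\delta[\phi_\delta=0] \;\le\; 2\inf_{s \in (0,1)} \int p_0^{1-s}\, p_\delta^s \, d\mu,
\]
and by independence of the $Y_t$'s under both hypotheses, the right-hand side factorizes into a product of per-round one-dimensional Bernoulli Chernoff integrals.

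Second, I would specialize to the symmetric choice $s = 1/2$, where each factor reduces to $\sqrt{\tfrac{1}{2}}\bigl[\sqrt{\tfrac{1}{2}+\delta_t} + \sqrt{\tfrac{1}{2}-\delta_t}\bigr]$. A short Taylor expansion of the two square roots around $\delta_t = 0$ shows this factor is bounded by $1 - c_0 \delta_t^2 \le \exp(-c_0 \delta_t^2)$ uniformly over $|\delta_t| \le 1/4$, for some universal $c_0 > 0$. Taking the product over rounds and invoking Lemma~\ref{lem:kl_quadratic} yields
\[
\Pr_0[\phi_\delta=1] + \Pr_\delta[\phi_\delta=0] \;\le\; 2\exp\!\bigl(-\tfrac{4c_0}{9}\,\mathsf{K}(\delta)\bigr),
\]
so choosing $C$ large enough to absorb both the prefactor $2$ and the constant $\tfrac{4c_0}{9}$ gives $\alpha + \beta \le e^{-c\mathsf{K}}$ for every $\delta$ with $\mathsf{K}(\delta) \ge C\mathsf{K}$, with universal $c > 0$.

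Third, because the test $\phi_\delta$ constructed above depends on $\delta$ while the statement asks for a single $\phi_{\mathsf{K}}$ independent of $\delta$, I would eliminate this dependence either by (a) thresholding a mixture likelihood ratio $\Lambda_\pi(Y) = \int \Lambda_\delta(Y) \, d\pi(\delta)$ against a carefully chosen prior $\pi$ supported on $\{\delta : \mathsf{K}(\delta) \ge C\mathsf{K}\}$, or (b) union-bounding the per-$\delta$ LRT over a Hellinger-$\varepsilon$-cover of the same set. The main obstacle is controlling the resulting log-cardinality penalty (equivalently, the second moment $E_0[\Lambda_\pi^2] = \prod_t \bigl(1 + 4\,\mathrm{Cov}_{\pi}(\delta_t, \delta_t')\bigr)$ for the mixture variant) so that the slack remains $O(\mathsf{K})$ and can be absorbed by enlarging $C$; this step is delicate because Bernoulli observations supply no per-round variance signal ($Y_t^2 = Y_t$), which prevents any quadratic statistic from detecting adversarial sign patterns and forces the mixture to exploit the product structure carefully in order to avoid the Rademacher-type cancellations that annihilate naive symmetric priors.
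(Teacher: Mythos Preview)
Your steps 1--2 are correct and take a different route from the paper. The paper thresholds the log-likelihood ratio $L_\delta(Y)$ at $\mathsf{K}/2$, controls type~I via Markov on $e^{L_\delta}$ (using $\mathbb{E}_0[e^{L_\delta}]=1$), and controls type~II via a separate Bernstein-type concentration lemma showing $\Pr_\delta\bigl(L_\delta \le \tfrac12\mathbb{E}_\delta[L_\delta]\bigr) \le \exp(-c_0\,\mathbb{E}_\delta[L_\delta])$. Your symmetric Chernoff bound at $s=1/2$ is cleaner: it bypasses the concentration lemma by directly factorizing the Bhattacharyya coefficient across rounds. Both approaches yield the same $\exp\bigl(-c\,\mathsf{K}(\delta)\bigr)$ rate for a test that depends on $\delta$.

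On step 3 you are more careful than the paper. The paper's test $\phi_{\mathsf{K}}(Y)=\mathbf{1}\{L_\delta(Y)\ge \mathsf{K}/2\}$ also depends on $\delta$ through $L_\delta$, and the paper never removes this dependence; read literally, its proof establishes only that for each fixed $\delta$ with $\mathsf{K}(\delta)\ge C\mathsf{K}$ the $\delta$-oracle LRT achieves the stated rate. Your suspicion about the obstruction is correct and in fact decisive against a uniform test: taking $|\delta_t|\equiv\epsilon\le 1/4$ with the uniform prior over sign patterns, the mixture $\int P_\delta\,d\pi$ equals $P_0$ exactly (each coordinate marginal is $\tfrac12\Bern(\tfrac12+\epsilon)+\tfrac12\Bern(\tfrac12-\epsilon)=\Bern(\tfrac12)$), so $\Pr_0(\phi=1)+\sup_\delta\Pr_\delta(\phi=0)\ge \Pr_0(\phi=1)+\mathbb{E}_\pi\Pr_\delta(\phi=0)=1$ for every test $\phi$. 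Neither your mixture-LRT nor your covering route can repair this; the per-$\delta$ oracle reading is the only one the paper's argument supports, and it is also how the result is used downstream (e.g., Proposition~\ref{prop:random_effects}).
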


\textbf{Proof sketch.}
The lower bound follows from the Bretagnolle--Huber inequality. The upper bound uses a likelihood-ratio test thresholded at $\mathsf{K}/2$, with bounded increments yielding concentration when $\mathrm{KL}(P_{\delta}\|P_0)\ge C\mathsf{K}$. Full proofs appear in Appendix~\ref{app:kl_proofs}. Together, these bounds show that the KL budget is both necessary and sufficient for detection, up to constants—establishing KL divergence as the right currency for reasoning about evaluation feasibility.

\subsection{Stochastic prompt heterogeneity}
\label{sec:kl_random_effects}
We now consider a random-effects model in which
$\delta_1,\dots,\delta_B \stackrel{i.i.d.}{\sim}\mathcal{D}$ for a distribution
$\mathcal{D}$ supported on $[-\tfrac14,\tfrac14]$. Unless stated otherwise, all expectations in this subsection are taken with respect to the random draw $\delta_{1:B} \sim \mathcal{D}^B$.

\begin{proposition}
\label{prop:random_effects}
Let $\mu_2=\mathbb{E}_{\delta\sim\mathcal{D}}[\delta^2]$.
There exist universal constants $c,C>0$ such that:
\begin{enumerate}
\item $\mathbb{E}[\mathsf{K}(\delta_{1:B})]\asymp B\,\mu_2$.
\item The test $\phi_{\mathsf{K}}$ from Theorem~\ref{thm:kl_upper_main} achieves
\[
\mathbb{E}\!\left[\Pr_0(\phi_{\mathsf{K}}=1)+\Pr_{\delta}(\phi_{\mathsf{K}}=0)\right]
\le e^{-c\mathsf{K}}
\]
whenever $B\mu_2\ge C\mathsf{K}$.
\end{enumerate}
\end{proposition}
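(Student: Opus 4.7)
Part (1) follows directly from Lemma~\ref{lem:kl_quadratic}. By linearity of expectation and the i.i.d.\ assumption on $\delta_{1:B}$,
\[
\mathbb{E}[\mathsf{K}(\delta_{1:B})] \;=\; B\cdot\mathbb{E}_{\delta\sim\mathcal{D}}\!\left[\KL\!\bigl(\Bern(\tfrac12+\delta)\,\big\|\,\Bern(\tfrac12)\bigr)\right],
\]
and the lemma sandwiches the per-term expectation between $2\mu_2$ and $\tfrac94\mu_2$, giving $\mathbb{E}[\mathsf{K}(\delta_{1:B})]\asymp B\mu_2$.

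For Part (2), the type-I error of $\phi_{\mathsf{K}}$ does not depend on $\delta$, so Theorem~\ref{thm:kl_upper_main} already bounds $\Pr_0(\phi_{\mathsf{K}}=1)\le e^{-c\mathsf{K}}$ uniformly. The work is to control the expected type-II error. The natural decomposition is
\[
\mathbb{E}[\Pr_\delta(\phi_{\mathsf{K}}=0)] \;\le\; \mathbb{E}\!\left[\Pr_\delta(\phi_{\mathsf{K}}=0)\,\mathbf{1}\{\mathsf{K}(\delta_{1:B})\ge C\mathsf{K}\}\right] \;+\; \Pr\!\left(\mathsf{K}(\delta_{1:B})<C\mathsf{K}\right),
\]
where $C$ is the constant from Theorem~\ref{thm:kl_upper_main}. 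On the good event, that theorem applies conditionally on $\delta_{1:B}$ and bounds the integrand by $e^{-c\mathsf{K}}$, so only the probability of the bad event remains to be controlled.

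By the left inequality of Lemma~\ref{lem:kl_quadratic}, $\mathsf{K}(\delta_{1:B})\ge 2\sum_t\delta_t^2$, so the bad event is contained in $\{\sum_t\delta_t^2 < C\mathsf{K}/2\}$. The summands $\delta_t^2$ are i.i.d., non-negative, bounded by $1/16$, and have mean $\mu_2$, so Bernstein's inequality (equivalently, a multiplicative Chernoff bound for bounded non-negative variables) gives a lower tail $\Pr(\sum_t\delta_t^2 < B\mu_2/2)\le \exp(-\kappa B\mu_2)$ for a universal $\kappa>0$. Under the hypothesis $B\mu_2\ge C'\mathsf{K}$ with $C'$ chosen so that $B\mu_2/2\ge C\mathsf{K}/2$, the bad event is a subset of this lower-tail event and has probability at most $\exp(-\kappa C'\mathsf{K})$. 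Combining with the good-event bound yields the claimed $e^{-c\mathsf{K}}$ after relabeling constants.

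The main obstacle is not any single inequality but the simultaneous choice of constants: $C$ is inherited from Theorem~\ref{thm:kl_upper_main} and sets the threshold for ``enough realized KL to detect''; $C'$ in the signal-strength hypothesis $B\mu_2\ge C'\mathsf{K}$ must then be chosen both large enough that $B\mu_2/2$ dominates $C\mathsf{K}/2$ and large enough that Bernstein's rate $\kappa C'$ matches the exponent $c$ of the upper bound. This is routine bookkeeping, but it is precisely where the universal constants in the proposition's statement are pinned down.
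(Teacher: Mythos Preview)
The paper does not supply a standalone proof of this proposition; it is stated as an immediate consequence of Lemma~\ref{lem:kl_quadratic} and Theorem~\ref{thm:kl_upper_main} and left implicit. Your argument is precisely the one the paper's scaffolding anticipates: Part~(1) via Lemma~\ref{lem:kl_quadratic} and linearity; Part~(2) by splitting on the event $\{\mathsf{K}(\delta_{1:B})\ge C\mathsf{K}\}$, invoking Theorem~\ref{thm:kl_upper_main} conditionally on the good event, and handling the bad event with a lower-tail concentration bound on the bounded i.i.d.\ summands $\delta_t^2$. The bookkeeping you describe for reconciling the constants is exactly right, and there is no gap.

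One minor clarification: the test $\phi_{\mathsf{K}}$ constructed in the proof of Theorem~\ref{thm:kl_upper_main} is the likelihood-ratio test, which depends on the realized $\delta_{1:B}$. Your sentence ``the type-I error of $\phi_{\mathsf{K}}$ does not depend on $\delta$'' is slightly imprecise in this light---what you mean (and what is true) is that the \emph{bound} $\Pr_0(\phi_{\mathsf{K}}=1)\le e^{-\mathsf{K}/2}$ holds uniformly over $\delta$, since $\mathbb{E}_0[e^L]=1$ for every $\delta$. This does not affect the argument, but it is worth stating cleanly since the oracle nature of $\phi_{\mathsf{K}}$ is what makes the conditional application of Theorem~\ref{thm:kl_upper_main} legitimate in the first place.
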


\textbf{Implication.} Up to constants, feasibility is governed by the accumulated KL budget, equivalently by $B\,\mathbb{E}[\delta^2]$ (Lemma~3.1, Proposition~3.4).

\subsection{Optimal Allocation of Evaluation Budget}
\label{sec:protocol-optimality}

Evaluation protocols allocate a finite judgment budget across heterogeneous prompts; we ask whether this allocation can improve detectability beyond KL-budget limits.

\subsubsection{Adversarial heterogeneity}

Let $m$ prompt types have incidence weights $w\in\Delta^m$. A protocol $\pi$ allocates a total budget $B$ judgments, producing counts $N_1,\dots,N_m$ with $\sum_i N_i=B$. Define the \emph{allocation bottleneck}
\[
\Lambda(\pi) := \mathbb E_\pi\!\left[\min_{i\in[m]} \frac{N_i}{w_i}\right].
\]
For a test $\varphi$, define the worst-case risk
\[
\mathcal R(\pi,\varphi)
:= \Pr_0^\pi(\varphi=1)
+ \sup_{\delta\in\mathcal H_1(\mu_2)} \Pr_\delta^\pi(\varphi=0),
\]
where
\(
\mathcal H_1(\mu_2)
=\{\delta:\sum_i w_i\delta_i^2\ge\mu_2\}.
\)

\begin{theorem}[Minimax optimality of proportional allocation]
\label{thm:minimax-allocation}
There exist universal constants $c,C>0$ such that for any allocation policy $\pi$ and test $\varphi$,
\[
\mathcal R(\pi,\varphi)
\;\ge\;
\tfrac12 \exp\!\big(-C\,\mu_2\,\Lambda(\pi)\big).
\]
Consequently,
\[
\inf_{\pi,\varphi} \mathcal R(\pi,\varphi)
\;\ge\;
\tfrac12 \exp(-C B\mu_2).
\]
Moreover, proportional allocation with $N_i\approx Bw_i$ satisfies
$\Lambda(\pi)\ge cB$ and achieves this rate up to constants.
\end{theorem}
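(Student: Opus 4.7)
The theorem has two halves, both governed by the KL divergence induced by the allocation. For the lower bound I would adapt the Bretagnolle--Huber construction from Theorem~\ref{thm:kl_lower_main}, choosing the alternative $\delta\in\mathcal H_1(\mu_2)$ adversarially given $\pi$. For the matching upper bound, proportional allocation produces a deterministic KL budget linear in $B$, and invoking Theorem~\ref{thm:kl_upper_main} produces the matching test.

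\textbf{Lower bound via the bottleneck coordinate.} I would first treat $\pi$ as deterministic, so that the counts $N_1,\dots,N_m$ are fixed. The adversary selects the bottleneck index $i^\star\in\arg\min_i N_i/w_i$ and places its mass there: $\delta_{i^\star}=\sqrt{\mu_2/w_{i^\star}}$ and $\delta_j=0$ for $j\neq i^\star$. This satisfies $\sum_i w_i\delta_i^2=\mu_2$, hence $\delta\in\mathcal H_1(\mu_2)$, provided $w_{i^\star}\ge 16\mu_2$; for smaller weights one spreads the mass over the cheapest few coordinates without changing the asymptotic rate. The upper half of Lemma~\ref{lem:kl_quadratic} gives $\KL(P_\delta\,\|\,P_0)\le\tfrac{9}{4}\,\mu_2\,\min_i N_i/w_i$, and Bretagnolle--Huber then yields $\mathcal R(\pi,\varphi)\ge\tfrac12\exp(-C\mu_2\,\min_i N_i/w_i)$, which is the claimed bound when $\pi$ is deterministic. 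The randomized case is lifted by conditioning on $\pi$'s internal randomness $R$ and invoking $\mathbb{E}[e^{-X}]\ge e^{-\mathbb{E}[X]}$.

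\textbf{Uniform bound, achievability, and main obstacle.} Because $\sum_i w_i(N_i/w_i)=B$ with $\sum_i w_i=1$, some coordinate always has $N_i/w_i\le B$, so $\Lambda(\pi)\le B$ for any $\pi$; this upgrades the per-policy bound to $\inf_{\pi,\varphi}\mathcal R\ge\tfrac12 e^{-CB\mu_2}$. For the upper direction, proportional allocation $N_i\approx Bw_i$ produces $\sum_i N_i\,\KL_i\ge 2B\sum_i w_i\delta_i^2\ge 2B\mu_2$ uniformly over $\mathcal H_1(\mu_2)$ by the lower half of Lemma~\ref{lem:kl_quadratic}, so Theorem~\ref{thm:kl_upper_main} exhibits a test with worst-case error $\le e^{-cB\mu_2}$. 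The main obstacle is the randomized-policy step: a fixed $\delta$ cannot exploit the realization-dependent bottleneck, so surfacing $\Lambda(\pi)=\mathbb E_R[\min_i N_i(R)/w_i]$ rather than the weaker $\min_i \mathbb E_R[N_i]/w_i$ requires a Le~Cam mixture prior over the adversarial coordinate, letting the adversary hedge against $\pi$'s internal randomness before the test fires.
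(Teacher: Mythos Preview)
Your proposal tracks the paper's proof almost exactly: place the least-favorable alternative on the bottleneck coordinate $i^\star=\arg\min_i N_i/w_i$, bound the induced KL via Lemma~\ref{lem:kl_quadratic}, apply Bretagnolle--Huber for the per-policy lower bound, deduce $\Lambda(\pi)\le B$ from $\sum_i w_i(N_i/w_i)=B$ for the uniform consequence, and invoke Theorem~\ref{thm:kl_upper_main} under proportional allocation for achievability. The paper additionally randomizes the sign of $\delta_{i^\star}$, but this is immaterial for a point-alternative Bretagnolle--Huber bound, and the paper does not address the $|\delta_{i^\star}|\le\tfrac14$ range constraint you handle by spreading mass.

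The obstacle you flag for randomized $\pi$---that a \emph{fixed} $\delta$ can only target $\min_i \mathbb{E}_R[N_i]/w_i$, whereas the stated bound requires the sharper $\Lambda(\pi)=\mathbb{E}_R[\min_i N_i/w_i]$---is real, and the paper's proof glosses over exactly this step: it writes ``the adversary may choose $\delta$ after observing the realized allocation,'' which is in tension with the $\sup_\delta$ sitting outside the probability in the definition of $\mathcal R$. Your condition-on-$R$-then-Jensen move fails for precisely the reason you identify (the conditional adversarial choice $\delta^{(R)}$ is not a single element of $\mathcal H_1(\mu_2)$), and a Le~Cam prior over the coordinate is the right instinct, though neither you nor the paper carries it through. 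In short, your sketch is at least as rigorous as the paper's on this point, and more honest about where the difficulty sits.
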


\textbf{Interpretation.} This result implies that in open-ended evaluation settings, where signal is spread diffusely across many prompt types, sophisticated allocation strategies cannot rescue underpowered comparisons. The fundamental bottleneck is the total KL budget, not how it is distributed.

\subsubsection{Stochastic heterogeneity}

Assume $\delta_1,\dots,\delta_m\stackrel{\text{i.i.d.}}{\sim}D$ with
$\mu_2=\mathbb E[\delta^2]$.
Let $S_q(\delta)$ denote the indices of the top $\lceil qm\rceil$
values of $\delta_i^2$, and define the concentrated signal
\[
\mu_{2,q}(\delta)
:= \sum_{i\in S_q(\delta)}
\frac{w_i}{\sum_{j\in S_q(\delta)} w_j}\,\delta_i^2.
\]

While Arena is not adversarial, Theorem 3.5 provides a worst-case bound showing that sufficiently diffuse signal alone is enough to preclude gains from adaptive allocation.

\begin{theorem}[Adaptive gains under stochastic heterogeneity]
\label{thm:stochastic-adaptive}
There exists a two-stage adaptive allocation policy
$\pi_{\mathrm{2stage}}$ and constants $c,c'>0$ such that, for all
sufficiently large $B$,
\[
\mathbb E_\delta\!\left[\mathcal R(\pi_{\mathrm{2stage}},\varphi)\right]
\;\le\;
\exp\!\big(-c\,B\,\mathbb E[\mu_{2,q}(\delta)]\big),
\]
while for proportional allocation $\pi_{\mathrm{prop}}$,
\[
\inf_\varphi
\mathbb E_\delta\!\left[\mathcal R(\pi_{\mathrm{prop}},\varphi)\right]
\;\ge\;
\exp(-c' B\mu_2).
\]
\end{theorem}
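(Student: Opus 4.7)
The plan is to construct an explicit two-stage policy that uses a small fraction of the budget for screening and then concentrates the remainder on high-$\delta^2$ types, invoke Theorem~\ref{thm:kl_upper_main} on the stage-2 stream to obtain the adaptive upper bound, and establish the matching lower bound for $\pi_{\mathrm{prop}}$ by applying the Bretagnolle--Huber inequality conditionally on $\delta$ and then averaging via Jensen's inequality.

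\textbf{Construction and stage-2 analysis.} Fix a small $\epsilon>0$ and spend $B_1=\epsilon B$ judgments proportionally across types, giving type $i$ approximately $B_1 w_i$ samples. Form a debiased estimate $\widehat{\delta_i^2}=(\hat p_i-\tfrac12)^2-\hat p_i(1-\hat p_i)/n_i^{(1)}$ (unbiased for $\delta_i^2$ up to an $O(1/n_i^{(1)})$ remainder). Let $\hat S$ be the indices of the top $\lceil qm\rceil$ values of $\widehat{\delta_i^2}$ and allocate the remaining $(1-\epsilon)B$ judgments proportionally within $\hat S$. On the event $E=\{\hat S=S_q(\delta)\}$, Lemma~\ref{lem:kl_quadratic} bounds the stage-2 KL budget below by $2(1-\epsilon)B\,\mu_{2,q}(\delta)$, and Theorem~\ref{thm:kl_upper_main} applied to the stage-2 stream alone gives conditional risk at most $\exp(-c(1-\epsilon)B\mu_{2,q}(\delta))$. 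The failure probability $\Pr(E^{c})$ is controlled by a Hoeffding bound on $\widehat{\delta_i^2}-\delta_i^2$ and decays exponentially in $B_1$, so it is absorbable into the overall bound. Taking expectation over $\delta$ yields $\mathbb{E}_\delta[\mathcal R(\pi_{\mathrm{2stage}},\varphi)]\le\exp(-cB\,\mathbb{E}[\mu_{2,q}(\delta)])$.

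\textbf{Lower bound for proportional allocation.} Under $\pi_{\mathrm{prop}}$ the counts $N_i\approx Bw_i$ are non-adaptive, so the conditional KL between $P_0^\pi$ and $P_\delta^\pi$ factorizes as $\sum_i N_i\,\KL(\Bern(\tfrac12)\|\Bern(\tfrac12+\delta_i))$, which Lemma~\ref{lem:kl_quadratic} (applied in the symmetric direction, via the same Taylor argument) bounds above by $O(B\sum_i w_i\delta_i^2)$. Bretagnolle--Huber then yields, for any test $\varphi$ and any realized $\delta$,
\[
\Pr_0^\pi(\varphi=1)+\Pr_\delta^\pi(\varphi=0)\;\ge\;\tfrac12\exp\!\big(-C\,B\,\textstyle\sum_i w_i\delta_i^2\big).
\]
Averaging over $\delta$ and applying Jensen's inequality to the convex map $x\mapsto e^{-x}$ gives $\mathbb{E}_\delta[\mathcal R(\pi_{\mathrm{prop}},\varphi)]\ge\tfrac12\exp(-CB\mu_2)$, matching the theorem statement up to constants.

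\textbf{Main obstacle.} The delicate step is controlling the screening event $E$ when $\mathcal D$ has no clear gap at the top-$q$ quantile of $\delta^2$: a pure Hoeffding bound fails when many prompts pile up near the cutoff. My plan is to replace the hard requirement $\hat S=S_q(\delta)$ by the softer statement $\sum_{i\in\hat S}w_i\delta_i^2\ge(1-o(1))\sum_{i\in S_q(\delta)}w_i\delta_i^2$, which only needs misranked prompts to lie within $O(1/\sqrt{B_1})$ of the cutoff. Quantifying this via a rank-stability argument and charging the missed mass into the $\epsilon$-slack is where the main technical effort lives. A secondary issue is rare types with $w_i\ll 1/B_1$, which are handled by pooling or exclusion at negligible cost to $\mathbb{E}[\mu_{2,q}(\delta)]$.
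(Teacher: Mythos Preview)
Your approach is essentially the paper's: both use a screen-then-focus two-stage policy, control screening failure via Hoeffding, apply the likelihood-ratio test of Theorem~\ref{thm:kl_upper_main} to the stage-2 stream, and obtain the proportional-allocation lower bound from Bretagnolle--Huber (the paper simply cites Proposition~\ref{prop:random_effects} for that step, whereas you spell out the BH-plus-Jensen argument explicitly). Two minor differences are worth noting. First, the paper allocates a \emph{uniform} $b$ judgments per type in stage~1 rather than your proportional $\epsilon B w_i$; this sidesteps the rare-type issue you flag at the end. Second, the paper does not engage with the no-gap-at-the-cutoff obstacle you correctly identify---it simply asserts $\hat S_q=S_q(\delta)$ with high probability by Hoeffding---so your proposed soft-recovery relaxation is in fact more careful than the paper's sketch.

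One shared gap you should be aware of: the step ``taking expectation over $\delta$ yields $\mathbb{E}_\delta[\mathcal R]\le\exp(-cB\,\mathbb{E}[\mu_{2,q}(\delta)])$'' does not follow from Jensen, since $x\mapsto e^{-x}$ is convex and Jensen goes the other direction. What is actually needed is concentration of $\mu_{2,q}(\delta)$ around its mean (it is an average of order statistics of $m$ i.i.d.\ draws, so this is obtainable for large $m$). The paper's sketch elides this in the same way, so you are not missing anything relative to it, but the step as written is not justified in either place.
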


\textbf{Implication.}
See Appendix~\ref{app:implications} for interpretation and practical consequences.

\begin{corollary}[Practical parameter choices for two-stage allocation]
\label{cor:two-stage-params}
Suppose there are $m$ prompt types and total budget $B$. Choose a screening budget $b = c \log m$ with $c \ge 10$ and a retention fraction $q \in [0.1, 0.3]$. Then screening succeeds with probability at least $1 - 2\exp(-c_1 b)$ for a universal constant $c_1>0$, and conditional on correct screening, the two-stage allocation achieves an error exponent proportional to $B \mu_{2,q}$, assuming repeated judgments within prompt types.
\end{corollary}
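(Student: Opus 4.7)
The plan is to decompose the analysis into two phases mirroring the two-stage policy from Theorem~\ref{thm:stochastic-adaptive}. First I analyze the screening phase in which each of the $m$ prompt types receives $b = c\log m$ judgments. For a fixed type $i$, Hoeffding's inequality gives $\Pr(|\widehat\delta_i - \delta_i| \ge \epsilon) \le 2\exp(-2b\epsilon^2)$ for any $\epsilon > 0$, and a union bound over the $m$ types together with a small universal choice of $\epsilon$ yields simultaneous concentration $\max_i |\widehat\delta_i - \delta_i| \le \epsilon$ on an event of probability at least $1 - 2m\exp(-2c\epsilon^2 \log m) = 1 - 2\exp(-(2c\epsilon^2 - 1)\log m)$. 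Choosing $c \ge 10$ makes the exponent a positive constant multiple of $b$, giving the claimed $1 - 2\exp(-c_1 b)$ screening guarantee.

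Next I pass from accurate per-type estimates to a high-signal retained set $\widehat S$ of size $\lceil qm \rceil$. Because $\widehat S$ collects the top-$q$ indices under $\widehat\delta_i^{\,2}$, the uniform $\epsilon$-concentration of $\widehat\delta_i$ implies that $\widehat S$ captures, up to boundary effects near the $q$-th quantile of $\delta_i^{2}$, the same aggregate KL contribution as the true top-$q$ set $S_q(\delta)$. Indices misranked across the quantile boundary differ by $O(\epsilon)$ in $|\widehat\delta_i| - |\delta_i|$, so after squaring and summing they contribute only a constant multiplicative distortion that can be absorbed into a universal constant. This gives $\sum_{i\in\widehat S} w_i \delta_i^{2} \gtrsim \mu_{2,q}(\delta)$ on the good event.

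Finally, conditional on screening success I apply the matching upper bound of Theorem~\ref{thm:kl_upper_main} to the exploitation phase, which spreads $B - mb$ judgments uniformly across $\widehat S$. Using the assumed ability to accumulate repeated judgments within retained types and Lemma~\ref{lem:kl_quadratic}, the KL budget accumulated along the exploitation stream is at least a universal constant times $(B - mb)\,\mu_{2,q}(\delta)$. Since $mb = c\,m\log m$ is sub-dominant in the large-$B$ regime assumed by the corollary, the resulting worst-case risk is bounded by $\exp(-c''\,B\,\mathbb{E}[\mu_{2,q}(\delta)])$ for a universal constant $c''>0$ after taking expectation over $\delta$, which is the claimed error exponent.

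The main obstacle is the second step: the screening stage does not need to recover the top-$q$ set exactly, and in fact cannot when several $\delta_i^{2}$ sit near the $q$-th quantile of $D$. The delicate part is quantifying how much KL budget is lost to such boundary swaps. The cleanest route is to show that for any swapped pair $(i \in \widehat S,\; j \notin \widehat S)$ the deficit $\delta_j^{2} - \delta_i^{2}$ is at most $O(\epsilon)$ by the ordering of the $\widehat\delta_i^{2}$, so the total deficit across $\widehat S$ is $O(\epsilon\,|\widehat S|)$, which is dominated by $qm \cdot \mu_{2,q}$ only when $q$ is bounded away from zero and one---explaining the explicit restriction $q \in [0.1, 0.3]$ in the corollary's statement.
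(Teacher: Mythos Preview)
Your three-step decomposition (Hoeffding concentration plus union bound for screening, KL accumulation via Lemma~\ref{lem:kl_quadratic}, then the likelihood-ratio test of Theorem~\ref{thm:kl_upper_main}) matches the paper's proof sketch of Theorem~\ref{thm:stochastic-adaptive}, from which the corollary is extracted. Your treatment of the screening event is in fact more careful than the paper's: the sketch simply asserts that $\widehat S_q = S_q(\delta)$ on the good event, implicitly assuming a separation at the $q$-th order statistic, whereas you correctly observe that exact recovery can fail when several $\delta_i^2$ cluster near the quantile boundary and attempt to control the resulting KL loss directly.

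However, your boundary-swap argument does not close with the stated parameters. The target screening probability $1 - 2\exp(-c_1 b)$ with $b = c\log m$ forces $2c\epsilon^2 - 1 > 0$, i.e.\ $\epsilon^2 > 1/(2c) \ge 1/20$, so the concentration tolerance satisfies $\epsilon \gtrsim 0.22$. Your deficit bound $O(\epsilon\,|\widehat S|) = O(\epsilon\, qm)$ is dominated by the retained signal $qm\,\mu_{2,q}$ only when $\epsilon \ll \mu_{2,q}$, yet $\mu_{2,q} \le 1/16$ since $|\delta_i|\le 1/4$, so these two constraints are incompatible. Moreover, both the deficit and the signal scale linearly in $qm$, so their ratio is $\epsilon/\mu_{2,q}$ independent of $q$; the restriction $q\in[0.1,0.3]$ therefore cannot be what controls swap losses. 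In the paper this range functions as a practical heuristic (balancing screening overhead against concentration gain), not as a technical ingredient. To make your swap argument rigorous you would need either the implicit quantile-gap assumption the paper relies on, or a finer bound on the \emph{number} of misranked indices rather than the worst-case per-swap deficit.
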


\textbf{Two-stage screening.}
The policy $\pi_{\mathrm{2stage}}$ allocates a small uniform screening budget to estimate $\delta_i^2$, then concentrates the remaining budget on the highest signal types. Algorithmic details and proofs are given in Appendix~\ref{app:proofs_34}.

\subsection{A Practitioner’s Guide to Human Evaluation Under Budget Constraints}
\label{sec:implementation-guide}

We translate the results of Sections~\ref{sec:theory_setup}--\ref{sec:protocol-optimality} into a concrete decision framework for designing human preference evaluations under finite budgets.

\textbf{Decision procedure.}
Given an evaluation goal and a fixed judgment budget:
\begin{enumerate}
    \item \textbf{Clarify the evaluation goal.}
    If the goal is deployment realism or broad prompt coverage, use open-ended evaluations with diverse prompts. If the goal is relative ranking or ablation under limited budgets, prefer curated or task-focused benchmarks.
    
    \item \textbf{Pilot estimation.}
    Collect a pilot of $n_0 \in [30, 100]$ decisive judgments and estimate $\hat{\delta} = \hat{p} - \tfrac{1}{2}$. If $|\hat{\delta}| \lesssim 0.05$, detection typically requires ${\sim}10^3$ judgments; consider a curated protocol or report the result as underpowered.
    
    \item \textbf{Feasibility check.}
    Use the pilot estimate $\hat{\delta}$ to assess feasibility via Eq.~\ref{eq:closed_form_simplified}: $n \approx 2.63/\hat{\delta}^2$ (for commonly used $\alpha = 0.05$, 90\% power). The theory in Section~\ref{sec:kl_theory} justifies the $\delta^{-2}$ scaling but not the precise constant.

    \item \textbf{Protocol choice.}
    If the required $n$ exceeds the available budget, either switch to a curated protocol that amplifies margins or explicitly report the comparison as underpowered rather than inconclusive.

    \item \textbf{Final consistency check.}
    Verify that a pilot estimate $\hat{\delta}$ has been obtained and that the implied budget $n \approx 2.63/\hat{\delta}^2$ is feasible. Otherwise, report the comparison as underpowered.

\end{enumerate}

\textbf{Worked example (open-ended).}
Consider a Chatbot Arena pair with pilot win rate $\hat{p}=0.56$ from $n_0=50$, yielding $\hat{\delta}=0.06$ and an implied budget requirement of $n \approx 2.63/0.06^2 \approx 731$. Under typical Arena budgets ($B \le 500$), non-detection should therefore be interpreted as a budget limitation rather than evidence of equivalence.

\textbf{When to use two-stage allocation.}
Algorithm~\ref{alg:two-stage} helps only when signal concentrates in a few prompt types. Both protocols are mostly diffuse (Appendix~\ref{app:offline-replay}), though MT-Bench occasionally shows concentration:
\begin{center}
\footnotesize
\begin{tabular}{p{0.46\columnwidth}cc}
\toprule
Scenario & Proportional & Two-stage \\
\midrule
Diffuse signal (typical) & \checkmark & $\times$ \\
Concentrated signal & \checkmark & $\checkmark$ if $\kappa > 1.5$ \\
Small budget ($B < mb$) & \checkmark & $\times$ \\
Large budget, uneven signal & \checkmark & \checkmark \\
\bottomrule
\end{tabular}
\end{center}

In diffuse regimes, proportional allocation is minimax-optimal (Theorem~\ref{thm:minimax-allocation}); two-stage allocation yields gains only when signal concentration is high, as characterized by Theorem~\ref{thm:stochastic-adaptive} and verified empirically in Appendix~\ref{app:implications}.

\section{Method: Detectability Curves and Budget Estimation}
\label{sec:method}

We empirically instantiate the detectability framework of Section~\ref{sec:kl_theory} using subsampling-based power estimates. We summarize detectability outcomes across representative model pairs and budgets in the results section. These empirical patterns motivate the protocol-level implications discussed in Section~\ref{sec:budgeting}.

Rather than reporting a single test outcome at a fixed budget, we characterize \emph{detectability as a function of budget}. For a given dataset and model pair, we repeatedly subsample $n$ judgments with replacement and estimate the probability that a hypothesis test detects an improvement, yielding a \emph{detectability curve} that maps evaluation budget $n$ to detection probability. We test a Bernoulli preference rate $p$ against $p=0.5$; classical power analysis implies the required budget scales as $n=\Theta(\delta^{-2})$ for $\delta = |p-0.5|$. Guided by Section~\ref{sec:kl_theory}, we interpret $\delta$ as an observable proxy for per-judgment information, so detectability curves summarize feasibility as a function of both effect size and realized evaluation allocation.

\subsection{Robustness and Statistical Considerations}
\label{sec:robustness}

We estimate detectability curves using repeated subsampling with replacement, which directly estimates the probability that a hypothesis test succeeds at a given budget. This directly addresses the operational question of practical interest and yields results qualitatively similar to bootstrap-based alternatives. We discard ties to obtain a best-case binary preference signal, as ties provide no directional information. Appendix~\ref{app:ties} shows that ties concentrate in low-margin regimes and that alternative tie encodings reduce estimated margins and increase required evaluation budgets, without altering qualitative conclusions.

\begin{figure}[ht]
  \centering
  \includegraphics[width=0.95\linewidth]{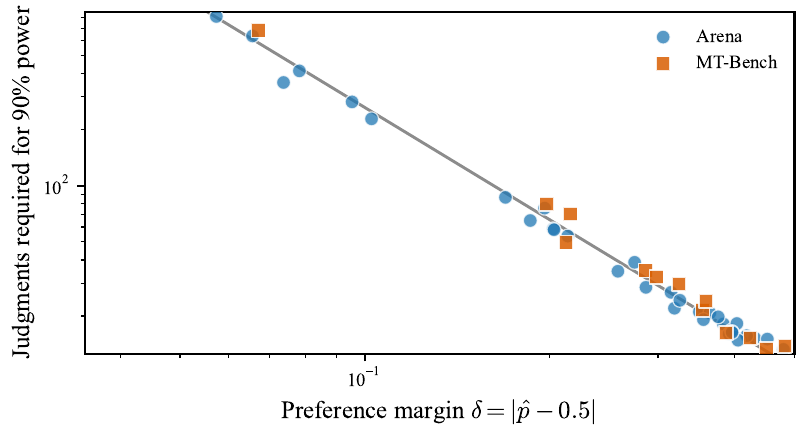}
  \caption{
  Empirical validation of theoretical scaling. Each point corresponds to a model comparison, plotted by its preference margin and the implied evaluation budget required for detection power. The solid curve shows the predicted asymptotic $\delta^{-2}$ scaling from Eq.~\ref{eq:closed_form_budget}.}
  \label{fig:theory_practice}
\end{figure}

\section{Empirical Evaluation}
\label{sec:emperical_eval}

We evaluate our framework across large-scale human preference datasets that differ in prompt distribution, annotation protocol, and evaluation goals. These datasets allow us to characterize how perceptual effect size and evaluation budget interact in open-ended and curated settings (Figure~\ref{fig:theory_practice}). To assess generality beyond chat-based text evaluation, we analyze the text-to-image preference dataset \textbf{Pick-a-Pic} \citep{Kirstain2023PickAPic} and the code generation platform \textbf{BigCodeArena} \citep{zhuo2025bigcodearena}, which provides execution feedback. Both exhibit comparable near-tie regimes: 14.2\% of image comparisons and 41\% of code comparisons fall into $|\hat{\delta}| \leq 0.10$, requiring hundreds to thousands of judgments for reliable detection (Appendix~\ref{app:pickapic}, \ref{app:bigcodearena}). These findings demonstrate that feasibility limits persist across modalities, even when partial ground truth is available.

\subsection{Selection Effects and Feasibility in Arena}
\label{sec:arena-independence}

Table~\ref{tab:tail-quantiles} summarizes the distribution of absolute preference margins $|\delta|$ among well-sampled model pairs and the implied number of decisive judgments required for reliable detection. Median margins among well-sampled model pairs under each protocol are relatively large under both Arena and MT-Bench, implying that many comparisons are easily detectable with modest budgets. This reflects strong selection effects: comparisons exhibiting clear signal tend to attract sustained evaluation effort and become well-sampled.

\begin{table}[ht]
\centering
\caption{Tail quantiles of absolute preference margins $|\delta|$ and the implied number of decisive human judgments required to achieve 90\% detection power, computed using Eq.~(8) with $\alpha=0.05$. Median margins are large due to selection effects, while feasibility limits arise in the lower tail.}
\label{tab:tail-quantiles}
\resizebox{\columnwidth}{!}{%
\begin{tabular}{llccc}
\toprule
Protocol & Statistic & $|\delta|$ & $n$ ($\alpha=0.05$) & $n$ ($\alpha=0.01$) \\
\midrule
Arena    & $p_{10}(|\delta|$) & 0.082 & 395  & 560  \\
Arena    & $p_{25}(|\delta|$) & 0.190 & 73   & 103  \\
Arena    & $p_{50}(|\delta|$) & 0.321 & 26   & 36   \\
\midrule
MT-Bench & $p_{10}(|\delta|$) & 0.047 & 1175 & 1664 \\
MT-Bench & $p_{25}(|\delta|$) & 0.187 & 75   & 106  \\
MT-Bench & $p_{50}(|\delta|$) & 0.285 & 32   & 46   \\
\bottomrule
\end{tabular}%
}
\end{table}

Feasibility limits, however, are governed by the lower tail of the margin distribution. Table~\ref{tab:near-tie} isolates this regime by conditioning on near-tie comparisons with $|\delta|\le\tau$. Even among well-sampled Arena pairs, a non-trivial fraction fall into the $|\delta|\le0.10$ regime, where the median conditional margin implies required budgets on the order of hundreds of judgments, exceeding typical Arena budgets (bootstrap in Appendix~\ref{app:robustness}). A similar pattern appears under MT-Bench: while margins are larger on average, near-tie comparisons still require thousands of judgments for reliable detection.

\begin{table}[ht]
\centering
\caption{Conditional feasibility for near-tie comparisons. Conditioning on low-signal comparisons with $|\delta|\le\tau$, we report the median margin within this subset and the implied budget for 90\% detection power. Proportion denotes the fraction of well-sampled model pairs under each protocol satisfying the near-tie condition.}
\label{tab:near-tie}
\resizebox{\columnwidth}{!}{%
\begin{tabular}{llcccc}
\toprule
Protocol & Condition & Count & Proportion & Median $|\delta|$ & $n$ ($\alpha=0.05$) \\
\midrule
Arena    & $|\delta|\le0.10$ & 6 & 0.176 & 0.072 & 506  \\
MT-Bench & $|\delta|\le0.10$ & 3 & 0.200 & 0.037 & 1878 \\
\bottomrule
\end{tabular}%
}
\end{table}

To assess whether violations of independence could explain deviations from the best-case feasibility bounds of Section~\ref{sec:kl_theory}, we compare classical i.i.d.\ standard errors to cluster-robust estimates allowing arbitrary correlation within prompts. The resulting variance inflation is negligible across all pairs (median ratio 1.000; 99th percentile below 1.001), indicating that detectability failures arise from small preference margins rather than violations of independence. Additional robustness checks are reported in Appendix~\ref{app:robustness}.

\subsection{Near-Tie Comparisons and Budget Requirements}
\label{sec:near-ties}

We analyze model comparisons with small absolute preference margins to characterize evaluation difficulty when qualitative differences are subtle. Conditioning on
\(
|\hat{p} - 0.5| < \tau
\)
isolates a low-signal regime in which statistical detectability is intrinsically limited.

Empirically, near-tie comparisons are not artifacts of selective evaluation. Estimated preference margins are stable as additional judgments accrue. Conditioning on an equal-exposure cohort (50 early judgments per pair) yields strongly correlated early and final margins (Spearman $\rho = 0.93$), with no evidence of drift toward lower separability (Appendix~\ref{app:robustness}). Near-ties are rare under equal exposure, but when they appear they persist after 200+ judgments, indicating that low-signal comparisons are intrinsic rather than induced by Arena's adaptive sampling.

\subsection{Protocol Effects: Arena versus MT-Bench}
\label{sec:mtbench}
The preceding analyses characterize feasibility limits under open-ended evaluation with user-generated prompts (e.g., Chatbot Arena). We now contrast this regime with MT-Bench, a curated benchmark that evaluates models on a fixed set of prompts with structured tasks and explicit rubrics. This comparison isolates how evaluation protocol design affects the distribution of observable preference margins and, consequently, statistical detectability.

\textbf{Comparing margin distributions across protocols.} MT-Bench exhibits preference margins shifted away from zero relative to Arena. As shown in Figure~\ref{fig:arena_mt}, its margin distribution places less mass in the near-tie regime and more mass at larger values. This shift reflects differences in per-judgment information induced by the evaluation protocol, not larger annotation budgets.

At fixed evaluation budgets ($n=50,100$), MT-Bench exhibits higher detectability distributions than Arena, at both the median and lower tail (Appendix~\ref{app:robustness_same_dudget}, Table~\ref{tab:same_budget_detectability}). Table~\ref{tab:correlation_sensitivity} quantifies how these feasibility limits are affected by intra-cluster correlation, reporting the inflation in required evaluation budgets under both median and near-tie regimes.

\begin{table}[t]
\centering
\caption{Sensitivity to correlation. Required decisive human judgments for 90\% detection power under intra-cluster correlation $\rho$. Inflation factor is the ratio of required nominal to effective sample sizes. ``---'' indicates $n_0\rho \geq 0.95$, rendering target power unattainable.}
\label{tab:correlation_sensitivity}
\resizebox{\columnwidth}{!}{
\begin{tabular}{lcccc}
\toprule
Regime & $\rho$ & $n(\rho{=}0)$ & Infl.$\times$ & $n_{\text{infl.}}$ \\
\midrule
Arena median        & 0.0000 &   26 &  1.000 &    26 \\
Arena median        & 0.0001 &   26 &  1.002 &    26 \\
Arena median        & 0.0010 &   26 &  1.025 &    26 \\
Arena median        & 0.0100 &   26 &  1.329 &    34 \\
\midrule
Hard near-tie ($\delta{=}0.05$) & 0.0000 & 1051 &  1.000 &  1051 \\
Hard near-tie ($\delta{=}0.05$) & 0.0001 & 1051 &  1.117 &  1174 \\
Hard near-tie ($\delta{=}0.05$) & 0.0010 & 1051 &  ---   & ${>}10000$ \\
Hard near-tie ($\delta{=}0.05$) & 0.0100 & 1051 &  ---   & ${>}10000$ \\
\bottomrule
\end{tabular}
}
\end{table}

\begin{figure}[t]
  \centering
  \includegraphics[width=\linewidth]{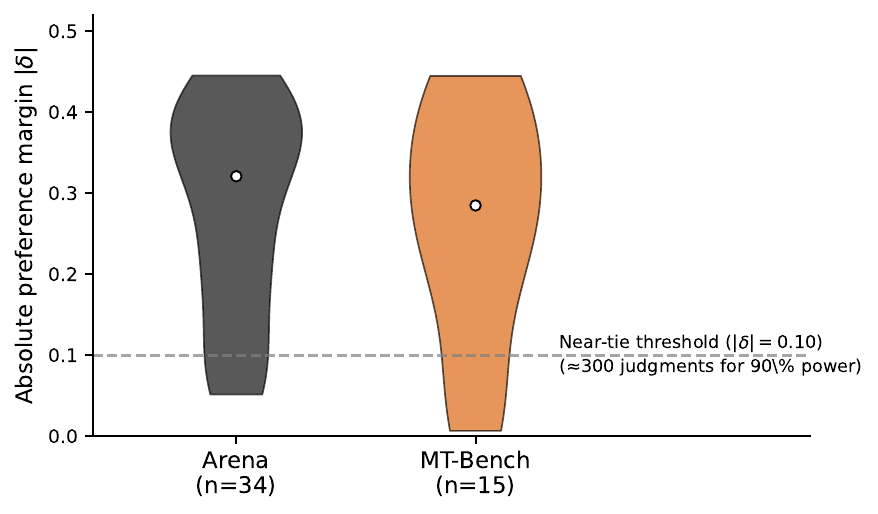}
    \caption{Distribution of absolute preference margins $|\delta|$ under Chatbot Arena and MT-Bench. Violin plots show margins for all model pairs meeting the sampling criteria of each protocol (Arena: $n=34$ pairs with $\geq$200 judgments; MT-Bench: $n=15$ pairs). The dashed line marks the near-tie threshold ($|\delta| = 0.10$), below which reliable detection typically requires hundreds of human judgments ($\approx$300 for 90\% power). MT-Bench exhibits a shift toward larger preference margins, concentrating more mass in higher-signal regimes associated with lower evaluation budgets.
}
  \label{fig:arena_mt}
\end{figure}

\textbf{Prompt-level variance as a mechanism.}
To understand why margins differ across protocols, we analyze prompt-level variability within the same model pairs. For each aligned pair evaluated under both Arena and MT-Bench, we compute the per-prompt win rate and measure its variance across prompts. This captures prompt-induced noise in the evaluation signal.

Across aligned model pairs, prompt-level variance is consistently higher under Arena than under MT-Bench. For pairs with sufficient coverage under both protocols, Arena’s prompt-level variance exceeds that of MT-Bench by approximately 1.5$\times$ at the median, driven primarily by higher between-prompt variability (Table~\ref{tab:variance_decomposition}). This reduction indicates that prompt curation and task structure suppress prompt-induced noise, increasing per-judgment KL divergence and shifting evaluations into higher-signal regimes.

\begin{table}[ht]
\centering
\small
\caption{
Variance decomposition for aligned model pairs evaluated under Chatbot Arena and MT-Bench. Reported values are medians across aligned pairs. Between-prompt variance captures variability in preference outcomes across prompts for the same model pair, while within-prompt variance reflects binomial noise from finite judgments. MT-Bench substantially reduces between-prompt variance relative to Arena, indicating more consistent preferences across prompts. This reduces prompt-induced noise, increasing the effective KL budget per judgment.}
\label{tab:variance_decomposition}
\begin{tabular}{lcc}
\toprule
 & Arena & MT-Bench \\
\midrule
Between-prompt variance & 0.102 & 0.066 \\
Within-prompt variance  & 0.000 & 0.015 \\
\midrule
\multicolumn{1}{p{4.0cm}}{\raggedright
Median per-pair btwn.-prompt var.\ ratio (Arena / MT)}
 & \multicolumn{2}{c}{1.49$\times$} \\
\bottomrule
\end{tabular}
\end{table}

\textbf{Interpretation.} These results clarify that the larger margins observed under MT-Bench should not be interpreted as artificial inflation or systematic bias. Rather, MT-Bench operates at a different point in the trade-off between coverage and statistical separability. By constraining the prompt distribution to tasks where model behaviors diverge more consistently, MT-Bench reduces prompt-level noise and increases the effective information content of each judgment. In contrast, Arena emphasizes breadth and open-ended interaction, which broadens coverage but induces greater variability across prompts and compresses observable margins.

\textbf{Implication for feasibility.} From the perspective of Section~\ref{sec:kl_theory}, these protocol differences translate directly into differences in accumulated KL budget. MT-Bench concentrates evaluation effort on comparisons with higher per-sample information, making reliable detection feasible under moderate budgets. Arena, by contrast, expends substantial effort in low-signal regimes where detectability is intrinsically limited, even as nominal evaluation budgets grow. Thus, feasibility limits are shaped not only by annotation volume but also, critically, by evaluation protocol design.

\textbf{Replications and robustness.} We replicate our analysis on a static snapshot of Chatbot Arena preferences and on heterogeneous pairing data from Infinity Chats \citep{jiang2025artificial}. Across settings, we observe consistent qualitative behavior: small preference margins require prohibitively large budgets for reliable detection, while larger effect sizes are detectable with modest evaluation effort, confirming that the budget–effect size relationship is robust across datasets and evaluation protocols.

\section{Practical Implications for Evaluation Design}
\label{sec:implications}

Our theoretical and empirical results have direct implications for how human preference evaluations should be designed, interpreted, and budgeted in practice. The central lesson is that statistical detectability is governed by the perceptual effect size induced by an evaluation protocol, rather than by annotation volume alone.

\subsection{Effect Size, KL Budget, and Detectability}

Reliable detection is governed primarily by effect size. When preference margins are small, additional judgments yield diminishing returns, while larger margins become detectable with modest budgets. Accordingly, failure to reject $H_0$ should often be interpreted as underpowered evaluation rather than evidence of model equivalence.

\subsection{Pilot Studies as Effect Size Screening}

Pilot human evaluations are most informative when viewed as effect size screening tools rather than as definitive tests of model superiority. A small pilot can quickly indicate whether a comparison lies in a regime where reliable detection is feasible or in a low-signal regime that would require substantially larger budgets.

In practice, pilot studies are underpowered whenever the detectable margin implied by the observed data and budget exceeds the effect size of interest. Reporting detectable margins alongside win rates provides a clearer account of what conclusions an evaluation can and cannot support.

\subsection{Closed-Form Budgeting Guidelines: How Many Human Judgments Are Enough?}
\label{sec:budgeting}

We provide a simple closed-form approximation that practitioners can use to estimate how many human judgments are required for a pairwise preference evaluation to be statistically informative.

Consider a binary preference test with success probability
$p = \tfrac{1}{2} + \delta$, where $\delta$ denotes the underlying preference margin. Testing
$H_0: p = \tfrac{1}{2}$ against $H_1: p \neq \tfrac{1}{2}$ at significance level $\alpha$ and power $1-\beta$
requires approximately
\begin{equation}
n(\delta)
\;\approx\;
\frac{(z_{1-\alpha/2} + z_{1-\beta})^2}{4\,\delta^2},
\label{eq:closed_form_budget}
\end{equation}
decisive human judgments, where $z_q$ is the $q$-quantile of the standard normal distribution.

As an illustrative example for commonly used choices $\alpha = 0.05$ and $1-\beta = 0.9$, this simplifies numerically to
\begin{equation}
\label{eq:closed_form_simplified}
n(\delta) \;\approx\; \frac{2.63}{\delta^2}.
\end{equation}

This reveals the quadratic dependence of sample complexity on preference margin. Applying ~\eqref{eq:closed_form_budget} explains why open-ended evaluations (e.g. Chatbot Arena) often operate in infeasible low-signal regimes, while curated benchmarks (e.g. MT-Bench) remain detectable under moderate budgets.

In practice, a small pilot evaluation can be used to estimate $\hat{\delta}$ and assess feasibility via \eqref{eq:closed_form_budget}. When the implied budget exceeds available resources, non-detection should be interpreted as underpowered rather than as evidence of no improvement. Broader implications for benchmark design are discussed in Appendix~\ref{app:benchmark_design}, with additional guidance on extrapolating effect sizes and transferring margins across protocols in Appendix~\ref{app:benchmark_budget_planning}. The limits of allocation-based gains are summarized in Section~\ref{sec:protocol-optimality} and Appendix~\ref{app:algorithm1-empirical}

\section{Related Work}

Our work relates to human evaluation of generative models, benchmark design, statistical power analysis, and preference-based learning.

\textbf{Human evaluation and benchmark design.} Human judgments are widely used to evaluate generative systems when automated metrics are misaligned with subjective quality \cite{gehrmann2021gem,elangovan2024considers}. For large language models, pairwise human preferences have emerged as a dominant evaluation primitive, exemplified by Chatbot Arena \cite{chiang2024chatbot} and MT-Bench \cite{zheng2023judging}. Prior work has emphasized leaderboard construction, aggregation, inter-annotator agreement, and the sensitivity of evaluation outcomes to benchmark design, prompt selection, and task framing \cite{dubois2023alpacafarm,gehrmann2021gem}. We complement this literature by showing that evaluation design also determines \emph{statistical detectability}: prompt distributions that amplify qualitative differences induce larger preference margins and reduce required evaluation budgets, while open-ended evaluations compress effect sizes and inflate sample complexity.

\textbf{Statistical power and sample complexity.} While our proofs rely on classical hypothesis testing tools \citep{lehmann2005testing,casella2002statistical}, our contribution is a minimax characterization of evaluation feasibility under adversarial prompt heterogeneity and an empirical analysis of where LLM evaluation protocols fall relative to this boundary. Prior work on adaptive sampling \citep{jamieson2014best, anonymous2025fewer} has shown nonuniform allocation can yield gains when signal varies across items; our negative result (Theorem~\ref{thm:minimax-allocation}) identifies the diffuse-signal regime where such gains are provably unattainable.

\textbf{Preference learning and efficient evaluation.} Pairwise preferences are foundational in learning-to-rank, reinforcement learning from human feedback, and social choice \cite{christiano2017deep,rafailov2023direct, bradley1952rank}. While much of this literature focuses on learning policies or global rankings, we instead treat preferences as an evaluation signal and study the detectability of improvements given finite data. Recent work on adaptive testing, example selection, and automated evaluators \cite{liu2023geval} is complementary to our analysis: such methods may reduce constant factors but do not eliminate the fundamental dependence of detectability on perceptual effect size that we characterize.

\section{Limitations}
\label{sec:limitations}
Our analysis makes several simplifying assumptions that bound the scope of our conclusions. We treat human judgments as independent and identically distributed; as shown empirically in Section~\ref{sec:arena-independence}, such correlations are negligible for well-sampled model pairs in our datasets and would only further tighten the best-case feasibility bounds if present. Our estimates should be interpreted as best-case benchmarks under idealized assumptions.

Second, our empirical analysis focuses on chat-oriented large language models, with extensions to image preference data and code generation; preference margins may differ in other domains such as mathematical reasoning or specialized technical tasks. Extending the analysis to additional modalities and task types is an important future direction.

Finally, the Arena data reflects a dynamic evaluation environment in which models improve over time and evaluation effort is not uniformly allocated. As discussed in Section~\ref{sec:robustness}, this induces selection effects that shape which comparisons become well-sampled, without systematically biasing them toward low-signal regimes. Our conclusions characterize the detectability of comparisons that attract sustained evaluation attention, rather than all possible model comparisons.

\section{Conclusion}
\label{sec:conclusion}

We studied when human preference evaluations can reliably detect improvements in generative models. We showed that detectability is governed by the total KL budget accumulated by an evaluation protocol, yielding sharp feasibility limits under heterogeneous prompts. Empirical analysis of open-ended evaluations reveals that many well-sampled model comparisons operate in low-signal regimes where reliable detection would require substantially larger budgets than those typically collected.

These results suggest that negative or inconclusive outcomes should often be interpreted as underpowered rather than as evidence of model equivalence. \textbf{Human evaluation results that do not explicitly justify detectability (via pilot estimation, power analysis, or an equivalent feasibility argument) should not be treated as reliable evidence for or against model improvements.}

\section*{Impact Statement}

This work provides methodological guidance for designing and interpreting human evaluations of generative models under limited budgets by clarifying the relationship between perceptual effect size, sample complexity, and statistical reliability. By explaining why many contemporary evaluations are underpowered, our results can help practitioners plan evaluations more efficiently and avoid overconfident conclusions drawn from insufficiently powered evaluations. We emphasize that these findings identify statistical limits rather than arguing against human evaluation, and should be used to support more transparent and responsible evaluation practices.

\nocite{}

\bibliography{main}

@inproceedings{chiang2024chatbot,
  title     = {Chatbot Arena: An Open Platform for Evaluating Large Language Models by Human Preference},
  author    = {Chiang, Wei-Lin and others},
  booktitle = {Proceedings of the 41st International Conference on Machine Learning},
  year      = {2024}
}

@inproceedings{zheng2023judging,
  title     = {Judging LLM-as-a-Judge with MT-Bench and Chatbot Arena},
  author    = {Zheng, Lianmin and others},
  booktitle = {Advances in Neural Information Processing Systems},
  year      = {2023}
}

@article{dubois2023alpacafarm,
  title   = {AlpacaFarm: A Simulation Framework for Methods that Learn from Human Feedback},
  author  = {Dubois, Yann and Li, Xuechen and Taori, Rohan and Zhang, Tianyi and Gulrajani, Ishaan and Ba, Jimmy and Guestrin, Carlos and Liang, Percy and Hashimoto, Tatsunori B.},
  journal = {arXiv preprint arXiv:2305.14387},
  year    = {2023}
}

@inproceedings{gehrmann2021gem,
  title     = {The GEM Benchmark: Natural Language Generation, Its Evaluation and Metrics},
  author    = {Gehrmann, Sebastian and others},
  booktitle = {Proceedings of the 59th Annual Meeting of the Association for Computational Linguistics},
  year      = {2021}
}

@inproceedings{elangovan2024considers,
  title     = {ConSiDERS: The Human Evaluation Framework -- Rethinking Human Evaluation for Generative Large Language Models},
  author    = {Elangovan, Aparna and Liu, Ling and Xu, Lei and Bodapati, Sravan and Roth, Dan},
  booktitle = {Proceedings of the 62nd Annual Meeting of the Association for Computational Linguistics (ACL)},
  year      = {2024}
}

@book{lehmann2005testing,
  title     = {Testing Statistical Hypotheses},
  author    = {Lehmann, Erich L. and Romano, Joseph P.},
  publisher = {Springer},
  year      = {2005},
  edition   = {3rd}
}

@inproceedings{christiano2017deep,
  title     = {Deep Reinforcement Learning from Human Preferences},
  author    = {Christiano, Paul F. and others},
  booktitle = {Advances in Neural Information Processing Systems},
  year      = {2017}
}

@inproceedings{rafailov2023direct,
  title     = {Direct Preference Optimization: Your Language Model is Secretly a Reward Model},
  author    = {Rafailov, Rafael and others},
  booktitle = {International Conference on Learning Representations},
  year      = {2023}
}

@article{liu2023geval,
  title   = {G-Eval: NLG Evaluation using GPT-4 with Better Human Alignment},
  author  = {Liu, Yang and others},
  journal = {arXiv preprint arXiv:2303.16634},
  year    = {2023}
}

@book{casella2002statistical,
  title     = {Statistical Inference},
  author    = {Casella, George and Berger, Roger L.},
  year      = {2002},
  edition   = {2},
  publisher = {Duxbury Press},
  address   = {Pacific Grove, CA},
  isbn      = {978-0534243128}
}

@inproceedings{
jiang2025artificial,
title={Artificial Hivemind: The Open-Ended Homogeneity of Language Models (and Beyond)},
author={Liwei Jiang and Yuanjun Chai and Margaret Li and Mickel Liu and Raymond Fok and Nouha Dziri and Yulia Tsvetkov and Maarten Sap and Yejin Choi},
booktitle={The Thirty-ninth Annual Conference on Neural Information Processing Systems Datasets and Benchmarks Track},
year={2025},
url={https://openreview.net/forum?id=saDOrrnNTz}
}

@article{Kirstain2023PickAPic,
  title     = {Pick-a-Pic: An Open Dataset of Human Preferences for Text-to-Image Generation},
  author    = {Yuval Kirstain and Arik Gvirtzman and Ariel Gordon and Ronen Talmon and Yoav Goldberg},
  journal   = {arXiv preprint arXiv:2302.13064},
  year      = {2023}
}

@misc{pickapic_hf,
  title        = {Pick-a-Pic v1 (no images)},
  author       = {{CarperAI}},
  howpublished = {\url{https://huggingface.co/datasets/CarperAI/pickapic_v1_no_images_training_sfw}},
  year         = {2023}
}

@inproceedings{dror2018hitchhiker,
  title     = {The Hitchhiker's Guide to Statistical Tests for Assessing Machine Learning Models},
  author    = {Dror, Rotem and Baumer, Gili and Bogomolov, Marina and Reichart, Roi},
  booktitle = {Proceedings of IJCAI},
  year      = {2018}
}

@inproceedings{card2020little,
  title     = {With Little Power Comes Great Responsibility},
  author    = {Card, Dallas and Tan, Chenhao and Smith, Noah A.},
  booktitle = {Proceedings of the 58th Annual Meeting of the Association for Computational Linguistics},
  year      = {2020}
}

@article{bradley1952rank,
  title   = {Rank Analysis of Incomplete Block Designs: I. The Method of Paired Comparisons},
  author  = {Bradley, Ralph Allan and Terry, Milton E.},
  journal = {Biometrika},
  volume  = {39},
  number  = {3/4},
  pages   = {324--345},
  year    = {1952}
}

@inproceedings{jamieson2014best,
  title={Best-arm identification algorithms for multi-armed bandits in the fixed confidence setting},
  author={Jamieson, Kevin and Nowak, Robert},
  booktitle={Conference on Learning Theory},
  pages={1468--1495},
  year={2014}
}

@misc{miller2024addingerrorbarsevals,
      title={Adding Error Bars to Evals: A Statistical Approach to Language Model Evaluations}, 
      author={Evan Miller},
      year={2024},
      eprint={2411.00640},
      archivePrefix={arXiv},
      primaryClass={stat.AP},
      url={https://arxiv.org/abs/2411.00640}, 
}

@misc{zhuo2025bigcodearena,
      title={BigCodeArena: Unveiling More Reliable Human Preferences in Code Generation via Execution}, 
      author={Terry Yue Zhuo and Xiaolong Jin and Hange Liu and Juyong Jiang and Tianyang Liu and Chen Gong and Bhupesh Bishnoi and Vaisakhi Mishra and Marek Suppa and Noah Ziems and Saiteja Utpala and Ming Xu and Guangyu Song and Kaixin Li and Yuhan Cao and Bo Liu and Zheng Liu and Sabina Abdurakhmanova and Wenhao Yu and Mengzhao Jia and Jihan Yao and Kenneth Hamilton and Kumar Shridhar and Minh Chien Vu and Dingmin Wang and Jiawei Liu and Zijian Wang and Qian Liu and Binyuan Hui and Meg Risdal and Ahsen Khaliq and Atin Sood and Zhenchang Xing and Wasi Uddin Ahmad and John Grundy and David Lo and Banghua Zhu and Xiaoning Du and Torsten Scholak and Leandro von Werra},
      year={2025},
      eprint={2510.08697},
      archivePrefix={arXiv},
      primaryClass={cs.SE},
      url={https://arxiv.org/abs/2510.08697}, 
}

@inproceedings{
anonymous2025fewer,
title={Fewer Battles, More Gain: An Information-Efficient Framework for Arena-based {LLM} Evaluation},
author={Anonymous},
booktitle={Submitted to The Fourteenth International Conference on Learning Representations},
year={2025},
url={https://openreview.net/forum?id=XUVqFRp9oi},
note={under review}
}
\bibliographystyle{icml}

\newpage
\appendix
\onecolumn
\appendix


\section{Sensitivity to Correlated Judgments}
\label{app:correlation}

To account for intra-cluster correlation, we compute the required nominal sample size to achieve an effective sample size of $n_0$ using the exact relationship derived from the standard ICC variance formula.

For $n$ observations from a single cluster with intra-cluster correlation $\rho$, the variance of the sample mean is $\mathrm{Var}(\bar{Y}) = (\sigma^2/n)[1 + (n-1)\rho]$. Because prompt reuse in Arena is negligible (Appendix~\ref{app:robustness_prompt_resue}), treating all judgments as a single exchangeable cluster is conservative; modeling many small clusters would further reduce effective correlation. Setting this equal to the variance of $n_0$ independent observations and solving for the required nominal sample size yields:
\begin{equation}
\label{eq:ninfl}
n_{\text{infl}} = \frac{n_0(1 - \rho)}{1 - n_0\rho}.
\end{equation}

\textbf{Derivation:} For effective sample size $n_0$, we require $\sigma^2/n_{\text{infl}} \cdot [1 + (n_{\text{infl}} - 1)\rho] = \sigma^2/n_0$. Expanding: $n_0[1 + n_{\text{infl}}\rho - \rho] = n_{\text{infl}}$, which rearranges to $n_0(1-\rho) = n_{\text{infl}}(1 - n_0\rho)$.

When $n_0\rho \geq 0.95$, the denominator $(1 - n_0\rho) \leq 0.05$ becomes small and $n_{\text{infl}}$ exceeds 10,000; we report such cases as ``$>10000$''. This threshold reflects infeasibility: as $n_0\rho \to 1$, achieving effective sample size $n_0$ requires infinite observations because judgments become nearly redundant.

Table~\ref{tab:correlation_sensitivity} quantifies how ICC inflates required sample sizes under both median and near-tie regimes. The inflation factor is $n_{\text{infl}}/n_0 = (1-\rho)/(1-n_0\rho)$.

\textbf{Empirical validation:} Cluster-robust standard errors (allowing arbitrary within-prompt correlation) yield negligible variance inflation across all Arena pairs (median ratio 1.000; 99th percentile $<$ 1.001), confirming that detectability limits arise from small margins rather than correlation. Any nonzero intra-cluster correlation therefore strictly reduces the effective KL budget accumulated per judgment, reinforcing the best-case feasibility limits established in Section~3 rather than undermining them.

\section{Proofs for Section~\ref{sec:kl_theory}}
\label{app:kl_proofs}

\subsection{Preliminaries}
Under \eqref{eq:kl_model}, conditional on a fixed $\delta_{1:B}$ the joint likelihood factorizes:
\[
P_{\delta}(Y_{1:B})=\prod_{t=1}^B p_t^{Y_t}(1-p_t)^{1-Y_t},
\qquad p_t=\tfrac12+\delta_t,
\]
and under $H_0$, $p_t=\tfrac12$ for all $t$. Hence
\[
\mathrm{KL}(P_{\delta}\|P_0)=\sum_{t=1}^B \mathrm{KL}(\mathrm{Bern}(p_t)\|\mathrm{Bern}(1/2)).
\]
Define the log-likelihood ratio
\[
L(Y):=\log\frac{dP_{\delta}}{dP_0}(Y_{1:B})
=\sum_{t=1}^B \ell_t(Y_t),
\quad
\ell_t(y)= y\log\frac{p_t}{1/2}+(1-y)\log\frac{1-p_t}{1/2}.
\]
Note that $\mathbb{E}_{\delta}[L]=\mathrm{KL}(P_{\delta}\|P_0)$ and $\mathbb{E}_0[e^{L}]=1$.

\subsection{Proof of Theorem~\ref{thm:kl_lower_main}}
\begin{proof}
Fix any alternative $\delta$ with $\mathrm{KL}(P_{\delta}\|P_0)\ge \mathsf{K}$.
By the Bretagnolle--Huber inequality (a standard consequence of the change-of-measure identity),
for any test $\phi$,
\[
\Pr_0(\phi=1)+\Pr_{\delta}(\phi=0)\ \ge\ \tfrac12 \exp\!\big(-\mathrm{KL}(P_{\delta}\|P_0)\big).
\]
Since $\mathrm{KL}(P_{\delta}\|P_0)\ge \mathsf{K}$, we obtain
\[
\Pr_0(\phi=1)+\Pr_{\delta}(\phi=0)\ \ge\ \tfrac12 e^{-\mathsf{K}}.
\]
Taking the supremum over all $\delta$ with KL budget at least $\mathsf{K}$ yields the claim.
Rearranging $\tfrac12 e^{-\mathsf{K}}\le \alpha+\beta$ gives the necessity condition
$\mathsf{K}\gtrsim \log\frac{1}{\alpha+\beta}$.
\end{proof}

\subsection{A concentration lemma for bounded log-likelihood increments}
\begin{lemma}[LLR concentration under $P_{\delta}$]
\label{lem:llr_conc_kl}
Assume $|\delta_t|\le 1/4$ for all $t$. Then $\ell_t(Y_t)$ are independent and uniformly bounded.
Moreover, there exist universal constants $a_0,c_0>0$ such that for any fixed $\delta$,
\[
\Pr_{\delta}\!\left(L \le \tfrac12\,\mathbb{E}_{\delta}[L]\right)
\le \exp\!\left(-c_0\,\mathbb{E}_{\delta}[L]\right)
\qquad \text{whenever } \mathbb{E}_{\delta}[L]\ge a_0.
\]
\end{lemma}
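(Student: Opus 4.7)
\medskip

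\noindent\textbf{Proof plan for Lemma~\ref{lem:llr_conc_kl}.}
The strategy is a standard Bernstein-style concentration argument for sums of bounded independent random variables, applied to the log-likelihood increments $\ell_t(Y_t)$ under $P_\delta$. Because $\mathbb{E}_\delta[L] = \sum_t \mathrm{KL}(\mathrm{Bern}(p_t)\|\mathrm{Bern}(1/2))$ is already the target quantity, the work amounts to showing that the per-step variance is itself controlled by the per-step KL, so that Bernstein's bound collapses into a clean exponent linear in $\mathbb{E}_\delta[L]$.

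\medskip

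\noindent\textbf{Step 1 (boundedness).} I would first note that $|\delta_t|\le 1/4$ implies $p_t\in[1/4,3/4]$, so the two possible values of $\ell_t$ are $\log(2p_t)\in[\log(1/2),\log(3/2)]$ and $\log(2(1-p_t))\in[\log(1/2),\log(3/2)]$. Hence $|\ell_t(Y_t)|\le \log 2 =: M$ almost surely, and by the product structure of $P_\delta$ the $\ell_t(Y_t)$ are independent.

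\medskip

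\noindent\textbf{Step 2 (variance controlled by KL).} Next I would show that there is a universal $C_1>0$ with
\[
\mathrm{Var}_\delta(\ell_t(Y_t))\ \le\ C_1\,\mathrm{KL}(\mathrm{Bern}(p_t)\,\|\,\mathrm{Bern}(1/2)).
\]
A direct computation gives $\mathbb{E}_\delta[\ell_t^2] = p_t\log^2(2p_t)+(1-p_t)\log^2(2(1-p_t))$, which by Taylor expansion about $\delta_t=0$ equals $4\delta_t^2 + O(\delta_t^4)$, uniformly in $|\delta_t|\le 1/4$. Combining with the lower half of Lemma~\ref{lem:kl_quadratic} (which gives $\mathrm{KL}\ge 2\delta_t^2$), this yields $\mathrm{Var}_\delta(\ell_t(Y_t))\le C_1\,\mathrm{KL}(\mathrm{Bern}(p_t)\|\mathrm{Bern}(1/2))$ for some explicit $C_1$ (heuristically $\approx 2$). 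Summing over $t$ gives $V:=\mathrm{Var}_\delta(L)\le C_1\,\mathbb{E}_\delta[L]$.

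\medskip

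\noindent\textbf{Step 3 (Bernstein).} With centered, independent, $M$-bounded increments and total variance $V$, Bernstein's inequality gives
\[
\Pr_\delta\!\left(L\le \mathbb{E}_\delta[L]-s\right)
\ \le\ \exp\!\left(-\frac{s^2}{2V+\tfrac{2}{3}Ms}\right)
\quad\text{for all }s>0.
\]
Taking $s=\tfrac12\mathbb{E}_\delta[L]$ and using $V\le C_1\mathbb{E}_\delta[L]$, the denominator is at most $(2C_1+\tfrac{M}{3})\,\mathbb{E}_\delta[L]$, so the exponent is at most $-c_0\,\mathbb{E}_\delta[L]$ with $c_0:=1/(8C_1+\tfrac{4M}{3})$, a universal constant. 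The threshold $a_0$ can be chosen as any positive constant (e.g.\ $a_0=1$) sufficient to make the bound informative; it plays no essential role beyond ensuring $\tfrac12\mathbb{E}_\delta[L]$ is a nontrivial deviation.

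\medskip

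\noindent\textbf{Expected obstacle.} The only nonroutine step is Step~2: establishing a variance-to-KL comparison with a universal constant valid on the full range $|\delta_t|\le 1/4$, rather than only asymptotically as $\delta_t\to 0$. A clean way to avoid messy Taylor remainders is to verify the inequality $x\log^2(2x)+(1-x)\log^2(2(1-x))\le C_1\,\bigl[x\log(2x)+(1-x)\log(2(1-x))\bigr]$ on $x\in[1/4,3/4]$ by checking that the ratio extends continuously to $C_1'=2$ at $x=1/2$ (via L'H\^opital) and is bounded on the compact interval. Once this constant is fixed, the rest is bookkeeping.
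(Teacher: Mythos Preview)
Your proposal is correct and follows essentially the same route as the paper: bounded independent increments, a variance-to-KL comparison, then Bernstein's inequality with $s=\tfrac12\mathbb{E}_\delta[L]$. Your treatment is in fact more careful than the paper's---you correctly identify the uniform bound on $|\ell_t|$ as $\log 2$ (the paper states $\log(3/2)$, overlooking that $\log(2p_t)$ can reach $-\log 2$ at $p_t=1/4$), and you make explicit via Lemma~\ref{lem:kl_quadratic} the variance $\lesssim$ KL step that the paper only asserts.
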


\begin{proof}
Boundedness: for $|\delta_t|\le 1/4$, we have $p_t\in[1/4,3/4]$, hence
\[
\left|\log\frac{p_t}{1/2}\right|\le \log(3/2),
\qquad
\left|\log\frac{1-p_t}{1/2}\right|\le \log(3/2),
\]
so $|\ell_t(Y_t)|\le \log(3/2)$ almost surely.

Let $Z_t := \ell_t(Y_t)-\mathbb{E}_{\delta}[\ell_t(Y_t)]$, so $\sum_t Z_t = L-\mathbb{E}_{\delta}[L]$. By boundedness, $Z_t$ are independent, centered, and sub-Gaussian with a universal parameter.
Applying Bernstein's inequality yields
\[
\Pr_{\delta}\!\left(L-\mathbb{E}_{\delta}[L]\le -\tfrac12 \mathbb{E}_{\delta}[L]\right)
\le \exp\!\left(-\frac{c\,\mathbb{E}_{\delta}[L]^2}{\sum_t \mathrm{Var}(Z_t) + \mathbb{E}_{\delta}[L]}\right)
\]
for a universal constant $c>0$.
In this Bernoulli-shift regime, $\mathrm{Var}(\ell_t(Y_t))$ is bounded by a universal constant multiple of $\mathbb{E}_{\delta}[\ell_t(Y_t)]$
(since $\ell_t$ is bounded and the mean KL controls the variance up to constants on $p_t\in[1/4,3/4]$),
implying $\sum_t \mathrm{Var}(Z_t)\le C_1\,\mathbb{E}_{\delta}[L]$ for universal $C_1$.
Substituting gives
\[
\Pr_{\delta}\!\left(L\le \tfrac12\mathbb{E}_{\delta}[L]\right)
\le \exp(-c_0\,\mathbb{E}_{\delta}[L])
\]
once $\mathbb{E}_{\delta}[L]\ge a_0$ for a suitable universal $a_0$.
\end{proof}

\subsection{Proof of Theorem~\ref{thm:kl_upper_main}}
\label{app:proof_33}
\begin{proof}
Fix a target $\mathsf{K}>0$ and define the test
\[
\phi_{\mathsf{K}}(Y):=\mathbf{1}\left\{L(Y)\ge \tfrac12 \mathsf{K}\right\}.
\]
(Type-I control) Under $P_0$, $e^{L}$ is the likelihood ratio and satisfies $\mathbb{E}_0[e^{L}]=1$.
By Markov's inequality,
\[
\Pr_0\!\left(L\ge \tfrac12\mathsf{K}\right)
=\Pr_0\!\left(e^{L}\ge e^{\mathsf{K}/2}\right)
\le e^{-\mathsf{K}/2}.
\]
(Type-II control) Consider any alternative $\delta$ such that $\mathrm{KL}(P_{\delta}\|P_0)=\mathbb{E}_{\delta}[L]\ge C\mathsf{K}$.
By Lemma~\ref{lem:llr_conc_kl},
\[
\Pr_{\delta}\!\left(L<\tfrac12\mathsf{K}\right)
\le \Pr_{\delta}\!\left(L\le \tfrac12 \mathbb{E}_{\delta}[L]\right)
\le \exp(-c_0\,\mathbb{E}_{\delta}[L])
\le \exp(-c_0 C \mathsf{K}),
\]
provided $C\mathsf{K}\ge a_0$. Choosing $C$ large enough and absorbing constants yields
\[
\Pr_0(\phi_{\mathsf{K}}=1)+\sup_{\mathrm{KL}\ge C\mathsf{K}}\Pr_{\delta}(\phi_{\mathsf{K}}=0)
\le e^{-\mathsf{K}/2}+e^{-c\mathsf{K}}
\le e^{-c'\mathsf{K}}
\]
for universal constants $c,c'>0$ and all $\mathsf{K}$ above a universal constant.
This proves the claim.
\end{proof}

\subsection{Proofs for Section~\ref{sec:protocol-optimality} (Allocation Optimality)}
\label{app:proofs_34}
This subsection provides proof sketches for Theorems~\ref{thm:minimax-allocation} and~\ref{thm:stochastic-adaptive}. Throughout, we reuse the KL divergence bounds from Lemma~3.1 and standard hypothesis testing inequalities (e.g., Bretagnolle--Huber). Constants are omitted when immaterial.

\subsubsection{Proof of Theorem~\ref{thm:minimax-allocation} (Minimax Optimality of Proportional Allocation)}

\textbf{Least-favorable alternative.}
Fix a (possibly adaptive) allocation policy $\pi$ producing counts
$N_1,\dots,N_m$ with $\sum_i N_i = B$. Let
\[
i^\star \in \arg\min_{i\in[m]} \frac{N_i}{w_i}.
\]

\begin{lemma}[Least-favorable alternative]
\label{lem:least-favorable}
For any allocation $(N_i)$ and signal budget $\mu_2>0$, consider the randomized
alternative $\delta^{(i^\star)}$ defined by
\[
\delta_i =
\begin{cases}
+ \sqrt{\mu_2 / w_{i^\star}} & \text{with probability } \tfrac12,\ i=i^\star,\\
- \sqrt{\mu_2 / w_{i^\star}} & \text{with probability } \tfrac12,\ i=i^\star,\\
0 & \text{otherwise}.
\end{cases}
\]
Then $\delta^{(i^\star)} \in \mathcal H_1(\mu_2)$ and, among all alternatives in
$\mathcal H_1(\mu_2)$, it minimizes the KL divergence accumulated under the
allocation $(N_i)$ up to universal constants.
\end{lemma}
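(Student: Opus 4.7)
The plan has two components: first verify membership $\delta^{(i^\star)} \in \mathcal H_1(\mu_2)$, then show the accumulated KL is minimal up to a universal constant. The first is a direct computation; the second reduces to a linear program in $(\delta_i^2)$ via the quadratic KL scaling of Lemma~\ref{lem:kl_quadratic}.

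I would start by checking membership. Since the sign randomization does not affect $\delta_i^2$, we have $(\delta_i^{(i^\star)})^2 = (\mu_2/w_{i^\star})\,\mathbf{1}\{i = i^\star\}$ deterministically, so $\sum_i w_i (\delta_i^{(i^\star)})^2 = \mu_2$ and the alternative sits on the boundary of $\mathcal H_1(\mu_2)$. Admissibility $|\delta_i| \le 1/4$ requires $\mu_2 \le w_{i^\star}/16$, which is the natural small-signal regime in which Lemma~\ref{lem:kl_quadratic} applies; outside this regime the quadratic approximation, and hence the LP reduction below, must be retracted.

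For the optimality step, fix the realized counts $N_{1:m}$. Lemma~\ref{lem:kl_quadratic} sandwiches the accumulated KL:
\[
\KL(P_{\delta'}\|P_0) \;=\; \sum_i N_i\,\KL\!\left(\Bern(\tfrac12+\delta_i')\,\middle\|\,\Bern(\tfrac12)\right) \;\in\; \left[\,2\sum_i N_i(\delta_i')^2,\ \tfrac94 \sum_i N_i(\delta_i')^2\,\right]
\]
for any $\delta'$ with $|\delta_i'|\le 1/4$. Up to the ratio $9/8$ between these bounds, minimizing accumulated KL over $\mathcal H_1(\mu_2)$ is equivalent to the linear program
\[
\min_{x \ge 0}\ \sum_i N_i\, x_i \quad \text{s.t.}\quad \sum_i w_i x_i \ge \mu_2.
\]
The LP is transparent: the cost per unit signal at type $i$ is $N_i/w_i$, so the optimum concentrates all mass on $i^\star = \arg\min_i N_i/w_i$ with $x_{i^\star} = \mu_2/w_{i^\star}$ and optimal value $\mu_2 \min_i N_i/w_i$. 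Since $\delta^{(i^\star)}$ realizes this LP optimum exactly, its KL against $P_0$ lies within a factor of $9/8$ of the infimum over $\mathcal H_1(\mu_2)$, yielding the ``up to universal constants'' claim.

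The main obstacle I anticipate is handling adaptive allocations, where $N_{1:m}$ are random and $i^\star$ is itself a function of the sample path. For the downstream use in Theorem~\ref{thm:minimax-allocation}, the $\pm$ randomization is precisely what makes the lemma useful: taking the mixture $\bar P = \tfrac12 P_{\delta^+} + \tfrac12 P_{\delta^-}$ produces an alternative whose total variation from $P_0$ can be bounded by a Bretagnolle--Huber argument with effective KL controlled (via convexity of KL) by $\mu_2 \cdot \mathbb E_\pi[\min_i N_i/w_i] = \mu_2\,\Lambda(\pi)$, matching the statement of Theorem~\ref{thm:minimax-allocation}. Verifying that the mixture KL inherits the same quadratic bound---and propagating Lemma~\ref{lem:kl_quadratic}'s constants cleanly through the outer expectation over the allocation's filtration---is the most delicate step, but it is a standard Le Cam--style calculation once the LP has identified the correct bottleneck index.
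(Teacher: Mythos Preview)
Your proposal is correct and follows essentially the same approach as the paper: verify membership by direct computation, then reduce the KL minimization to the linear program $\min_x \sum_i N_i x_i$ subject to $\sum_i w_i x_i \ge \mu_2$ via Lemma~\ref{lem:kl_quadratic}, and observe that the optimum concentrates all mass on the bottleneck index $i^\star$. Your treatment is in fact slightly more careful than the paper's---you make the admissibility constraint $\mu_2 \le w_{i^\star}/16$ explicit and track the precise constant $9/8$ from the quadratic sandwich, whereas the paper absorbs both into ``$\lesssim$'' and ``up to universal constants.''
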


\textbf{Proof of Lemma~\ref{lem:least-favorable}.}
First, verify that $\delta^{(i^\star)} \in \mathcal H_1(\mu_2)$:
\[
\sum_{i=1}^m w_i \delta_i^2 = w_{i^\star} \cdot \left(\sqrt{\mu_2/w_{i^\star}}\right)^2 = w_{i^\star} \cdot \frac{\mu_2}{w_{i^\star}} = \mu_2. \quad \checkmark
\]

For our construction $\delta^{(i^\star)}$, applying Lemma~3.1 gives:
\[
\mathrm{KL}\!\left(P_{\delta^{(i^\star)}} \,\middle\|\, P_0\right)
\;\lesssim\;
N_{i^\star} \delta_{i^\star}^2
\;=\;
N_{i^\star} \cdot \frac{\mu_2}{w_{i^\star}}
\;=\;
\mu_2 \cdot \frac{N_{i^\star}}{w_{i^\star}}.
\]

\textbf{Why is this least-favorable?} Consider any alternative $\tilde\delta$ with $\sum_i w_i \tilde\delta_i^2 = \mu_2$ (we can restrict to equality by monotonicity). The KL divergence under allocation $(N_i)$ satisfies:
\[
\mathrm{KL}\!\left(P_{\tilde\delta} \,\middle\|\, P_0\right)
\;\lesssim\;
\sum_{i=1}^m N_i \tilde\delta_i^2.
\]

For any distribution $(\tilde\delta_i^2)$ satisfying the constraint:
\begin{align*}
\sum_{i=1}^m N_i \tilde\delta_i^2 
&= \sum_{i=1}^m w_i \tilde\delta_i^2 \cdot \frac{N_i}{w_i} \\
&\geq \left(\min_{i} \frac{N_i}{w_i}\right) \sum_{i=1}^m w_i \tilde\delta_i^2 \\
&= \frac{N_{i^\star}}{w_{i^\star}} \cdot \mu_2,
\end{align*}
with equality when all mass is concentrated on $i^\star$. This proves that $\delta^{(i^\star)}$ minimizes the KL divergence among all alternatives satisfying the constraint.
\hfill $\square$

\textbf{Intuition.}
Any alternative satisfying $\sum_i w_i \delta_i^2 \ge \mu_2$ must place at least
$\mu_2$ total $\delta^2$-mass across prompt types. Concentrating this mass on the
type with smallest normalized allocation $N_i/w_i$ minimizes the resulting KL
divergence, since KL scales linearly in both $N_i$ and $\delta_i^2$
(Lemma~3.1). The adversary always attacks the weakest link.

\textbf{Completing the proof of Theorem~\ref{thm:minimax-allocation}.}
Fix an arbitrary (possibly adaptive) allocation policy $\pi$ producing counts
$N_1,\dots,N_m$ with $\sum_i N_i = B$. Define the allocation bottleneck:
\[
\Lambda(\pi) := \mathbb{E}_\pi\left[\min_{i \in [m]} \frac{N_i}{w_i}\right].
\]

From Lemma~\ref{lem:least-favorable}, for any realized allocation, the adversary can choose $\delta^{(i^\star)}$ satisfying:
\[
\mathrm{KL}\!\left(P_{\delta^{(i^\star)}}^\pi \,\middle\|\, P_0^\pi\right)
\;\le\;
c\,\mu_2 \cdot \min_{i} \frac{N_i}{w_i}.
\]

Taking expectation over the randomness of $\pi$ yields the bound in terms of
$\Lambda(\pi) = \mathbb{E}_\pi[\min_i N_i / w_i]$.

Applying the Bretagnolle--Huber inequality then gives, for any test $\varphi$,
\[
\Pr_0^\pi(\varphi=1) + \Pr_{\delta^{(i^\star)}}^\pi(\varphi=0)
\;\ge\;
\tfrac12 \exp\!\big(-C\,\mu_2\,\Lambda(\pi)\big).
\]

Since the adversary may choose $\delta$ after observing the realized allocation,
this lower bound holds for all adaptive policies $\pi$. This gives the minimax lower bound:
\[
\inf_{\pi, \varphi} R(\pi, \varphi) \geq \frac{1}{2} \exp(-C' B \mu_2),
\]
where we used that $\Lambda(\pi) \leq B$ for any allocation.

Finally, for proportional allocation with $N_i \approx B w_i$, we have
$\Lambda(\pi) \ge c' B$, which combined with Theorem~\ref{thm:minimax-allocation} yields the stated minimax rate.
\hfill $\square$

\subsubsection{Proof Sketch of Theorem~\ref{thm:stochastic-adaptive} (Adaptive Gains under Stochastic Heterogeneity)}

We provide a proof sketch highlighting the main steps and rate dependencies. A fully detailed finite-sample proof follows by combining uniform Hoeffding concentration with the likelihood-ratio testing argument of Appendix~\ref{app:proof_33}, and is omitted for brevity. We outline the argument for the two-stage screening allocation.

\textbf{Step 1: Screening accuracy.}
In Stage~1, each prompt type is sampled $b$ times. By Hoeffding's inequality,
for $b = \Omega(\log m)$, the empirical estimates $\hat\delta_i^2$ uniformly
concentrate around $\delta_i^2$ with probability at least $1-\eta$.
Consequently, the set $\hat S_q$ of the top $\lceil qm\rceil$ empirical values
coincides with the oracle set $S_q(\delta)$ except on an event of probability
at most $\eta$.

\textbf{Step 2: KL budget under correct screening.}
Conditioned on correct screening, all Stage~2 samples are drawn from prompt
types in $S_q(\delta)$. Applying Lemma~3.1 and linearity of KL,
the accumulated KL divergence in Stage~2 satisfies
\[
\mathrm{KL}(P_\delta^\pi \,\|\, P_0^\pi)
\;\ge\;
c\, B_2\, \mu_{2,q}(\delta),
\]
where $B_2$ is the remaining budget after screening.

\textbf{Step 3: Error decomposition.}
Decomposing on the screening event,
\[
\Pr(\text{error})
\;\le\;
\Pr(\text{screening fails})
+
\Pr(\text{test fails} \mid \text{screening succeeds})
\;\le\;
\eta + \exp(-c B_2 \mu_{2,q}(\delta)).
\]
Taking expectation over $\delta\sim D^m$ yields the stated bound. The comparison
to proportional allocation follows directly from Proposition~3.4.

\textbf{Rates.}
In particular, screening succeeds with probability at least
$1 - 2\exp(-c_1 b)$ for a universal constant $c_1>0$, and conditional on correct
screening, the Stage~2 test achieves total error at most
$\exp(-c_2 B_2 \mu_{2,q})$ for a universal constant $c_2>0$.
Thus, choosing $b = \Theta(\log m)$ ensures that the screening overhead is
negligible relative to the exponential decay governed by $B_2\mu_{2,q}$.

\subsubsection{Two-Stage Screening Allocation Procedure}

\begin{algorithm}[H]
\caption{Two-Stage Screening Allocation}
\label{alg:two-stage}
\begin{algorithmic}
\STATE {\bfseries Input:} total budget $B$, screening budget per type $b$, retention fraction $q$
\STATE {\bfseries Output:} decision on whether a detectable preference difference exists
\STATE
\STATE {\bfseries Stage 1 (Screening):}
\FOR{$i = 1$ {\bfseries to} $m$}
  \STATE Collect $b$ judgments for prompt type $i$
  \STATE Compute empirical estimate $\hat{\delta}_i$
\ENDFOR
\STATE Let $\hat S_q$ be the $\lceil qm\rceil$ prompt types with largest $\hat{\delta}_i^2$
\STATE
\STATE {\bfseries Stage 2 (Focusing):}
\STATE Allocate remaining budget $B - mb$ by sampling only prompt types in $\hat S_q$
\STATE \hspace{1em} proportionally to their weights $w_i$
\STATE
\STATE {\bfseries Testing:}
\STATE Apply a two-sided binomial or likelihood-ratio test using Stage~2 outcomes
\end{algorithmic}
\end{algorithm}

\section{Implications for Benchmark Design and Budget Planning}
\label{app:implications}

\subsection{Empirical Verification of Two-Stage Allocation}
\label{app:algorithm1-empirical}

We empirically verify the behavior of Algorithm~\ref{alg:two-stage} using a controlled simulation designed to isolate the role of prompt-type heterogeneity.

\textbf{Simulation setup.}
We consider $m$ prompt types with weights $w \in \Delta_m$. For each type $i$, judgments are drawn i.i.d.\ as
\[
Y \sim \mathrm{Bernoulli}\!\left(\tfrac12 + \delta_i\right),
\]
with total evaluation budget $B$. We compare two allocation strategies:
(i) \emph{proportional allocation}, assigning $B w_i$ samples to each type, and
(ii) the two-stage screening-and-focusing strategy of Algorithm~\ref{alg:two-stage}, using a screening budget $b = \lceil c \log m \rceil$ per type and retaining a fraction $q$ of prompt types for stage~2. Hypothesis tests are two-sided binomial tests at level $\alpha = 0.05$, applied to stage-2 samples only.

\textbf{Signal regimes.}
We consider two regimes.
In the \emph{concentrated} regime, signal is unevenly distributed: a small subset of prompt types accounts for a disproportionate share of $\sum_i w_i \delta_i^2$.
In the \emph{diffuse} regime, signal is approximately uniform across types.
These settings correspond respectively to the assumptions of Theorems~3.6 and~3.5.

\textbf{Detectability under budget constraints.}
Figure~\ref{fig:algorithm1-power} reports empirical power as a function of total budget $B$.
In the concentrated regime, two-stage allocation consistently achieves higher power than proportional allocation at the same budget, validating the improved detectability predicted by Theorem~\ref{thm:stochastic-adaptive}. In contrast, in the diffuse regime, proportional allocation dominates; two-stage allocation underperforms due to screening overhead, consistent with the minimax optimality result of Theorem~\ref{thm:minimax-allocation}.

\textbf{Screening accuracy is not the mechanism.}
Figure~\ref{fig:algorithm1-screening} reports the Jaccard overlap between the selected set of prompt types and the oracle top-$q$ set ranked by $w_i \delta_i^2$. Even in the concentrated regime, exact recovery of the oracle set is limited and noisy. This demonstrates that the gains of Algorithm~\ref{alg:two-stage} do not rely on accurate identification of individual high-signal prompt types.

\textbf{Signal-mass concentration explains gains.}
Figure~\ref{fig:algorithm1-signal} reports the fraction of total signal mass captured by the retained prompt types,
\[
\frac{\sum_{i \in \hat S_q} w_i \delta_i^2}{\sum_{i=1}^m w_i \delta_i^2}.
\]
In the diffuse regime, this quantity concentrates at $q$, indicating that screening is no better than random focusing. In contrast, in the concentrated regime, Algorithm~\ref{alg:two-stage} captures substantially more than a $q$-fraction of total signal mass, despite imperfect set recovery. This concentration of signal mass directly explains the power improvements observed in Figure~\ref{fig:algorithm1-power} and aligns with the KL-based analysis underlying Theorem~\ref{thm:stochastic-adaptive}.

\begin{figure}[t]
    \centering
    \includegraphics[width=\linewidth]{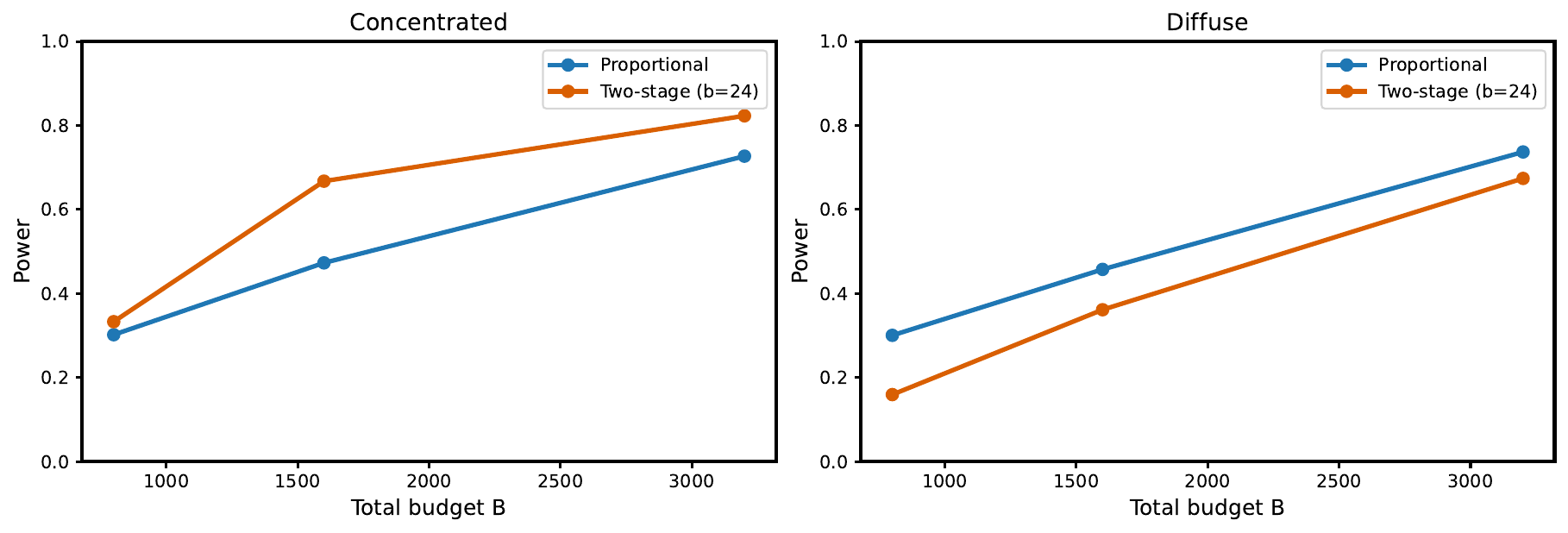}
    \caption{
    Empirical power as a function of total evaluation budget $B$. Two-stage allocation improves detectability when signal is concentrated across prompt types (left), but underperforms proportional allocation in the diffuse regime (right).
    }
    \label{fig:algorithm1-power}
\end{figure}

\begin{figure}[t]
    \centering
    \includegraphics[width=\linewidth]{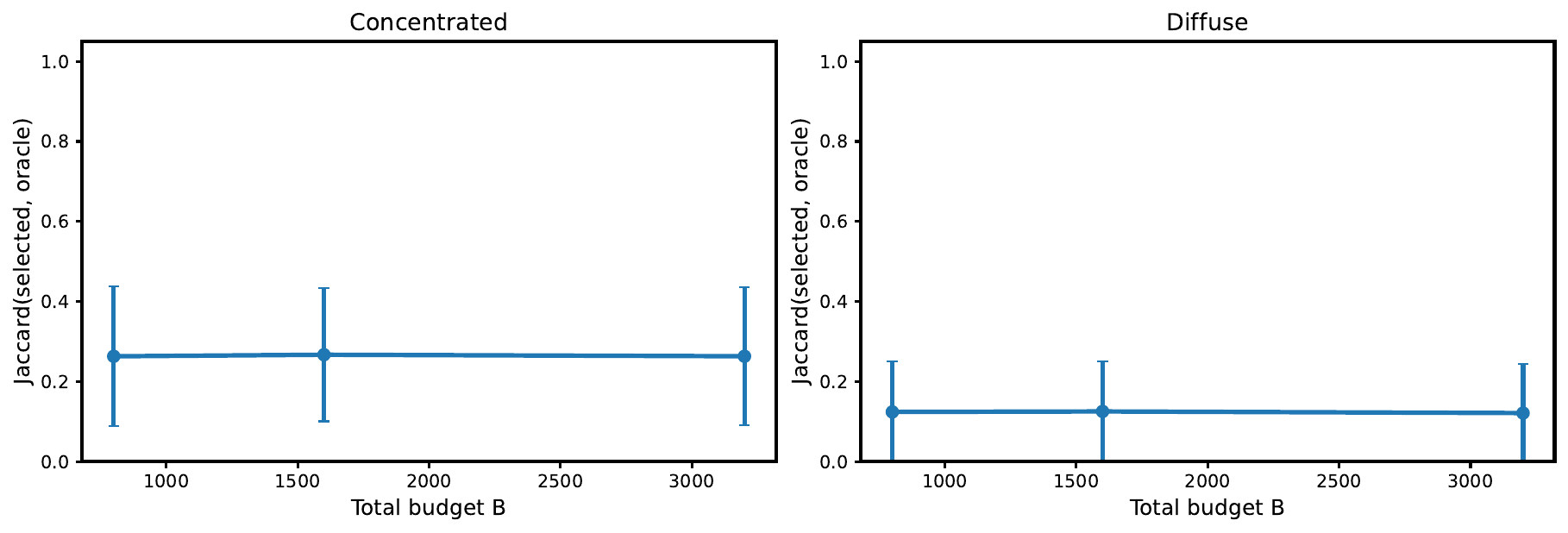}
    \caption{
    Screening accuracy measured by Jaccard overlap with the oracle top-$q$ set.
    Exact recovery of high-signal prompt types is limited even in the concentrated regime,
    indicating that screening accuracy alone does not explain the gains of Algorithm~\ref{alg:two-stage}.
    }
    \label{fig:algorithm1-screening}
\end{figure}

\begin{figure}[t]
    \centering
    \includegraphics[width=\linewidth]{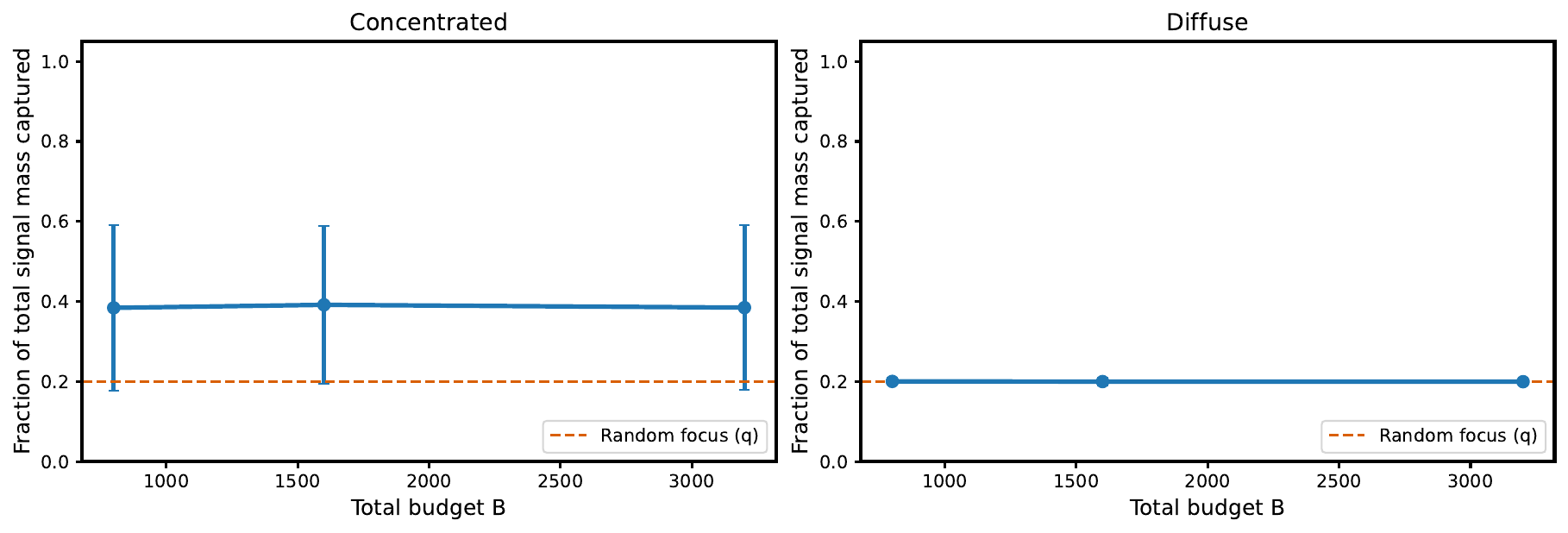}
    \caption{
    Fraction of total signal mass captured by the retained prompt types.
    The dashed line indicates the $q$ baseline corresponding to random focusing.
    Algorithm~\ref{alg:two-stage} concentrates substantially more signal mass than random selection in the concentrated regime,
    while no such concentration is possible in the diffuse regime.
    }
    \label{fig:algorithm1-signal}
\end{figure}

\subsection{Offline Replay Validation of Algorithm~\ref{alg:two-stage}}
\label{app:offline-replay}

We next test whether the same qualitative behavior appears under real human-evaluation data constraints.

We further validate Algorithm~\ref{alg:two-stage} via \emph{offline replay} on existing human judgment datasets. Because Arena and MT-Bench do not support online adaptive allocation, we evaluate counterfactual strategies via offline replay. We resample without replacement from fixed pools of judgments so all strategies share the same information and budget constraints

\textbf{Replay protocol.}
For each model pair with sufficient human judgments, we compare proportional allocation to the two-stage screening-and-focusing strategy of Algorithm~\ref{alg:two-stage} under a fixed budget~$B$. In each replay trial, judgments are sampled without replacement from the existing pool. Detection is assessed using the same two-sided hypothesis test as in Section~\ref{sec:method}. Reported power corresponds to the fraction of trials in which the null hypothesis is rejected. All results are averaged over repeated replay trials.

\textbf{Chatbot Arena.}
For Chatbot Arena, each question typically appears only once per model pair, making prompt-level screening ineffective without additional coarsening. We therefore partition prompts into coarse types (e.g., by language), ensuring that each type contains multiple judgments. Offline replay under realistic per-pair budgets shows that two-stage allocation does \emph{not} improve detection power on average, and in some cases reduces power due to screening noise overwhelming any potential concentration gains. This outcome reflects highly diffuse prompt-level signal in Arena and empirically confirms the negative regime of Theorem~\ref{thm:stochastic-adaptive}, in which adaptive allocation provides no benefit over proportional sampling.

\textbf{MT-Bench (human judgments).}
MT-Bench human evaluations contain multiple independent judgments per question, allowing each question to be treated directly as a prompt type. Because MT-Bench collects fewer total judgments per model pair, we evaluate Algorithm~\ref{alg:two-stage} using pair-specific budgets capped at a maximum value, ensuring that proportional and two-stage allocation operate under identical data constraints. Offline replay shows modest but systematic gains from two-stage allocation for a subset of model pairs, increasing mean detection power from $0.067$ under proportional allocation to $0.081$ under two-stage allocation across $15$ well-sampled pairs. Gains are concentrated in a small number of comparisons (maximum observed gain $0.175$), while most pairs show no improvement, indicating diffuse signal. While small in absolute magnitude, these gains occur in low-power regimes where detection is otherwise unlikely, making even modest improvements meaningful. This selective pattern of improvement is consistent with the stochastic-heterogeneity regime characterized in Theorem~\ref{thm:stochastic-adaptive}.

\textbf{Summary.}
To summarize, these replay experiments mirror the theoretical dichotomy of Section~\ref{sec:protocol-optimality}: adaptive allocation yields no benefit in diffuse regimes such as Chatbot Arena, but can provide measurable gains when prompt-level signal is sufficiently concentrated, as in MT-Bench. These results empirically validate both the positive and negative cases of Algorithm~\ref{alg:two-stage} without requiring new data collection or online deployment. Because replay conditions on a fixed pool of existing judgments, these results characterize detectability under fixed data availability rather than the behavior of a deployable online adaptive policy.

\subsection{Implications for Benchmark Design}
\label{app:benchmark_design}
Benchmark design implicitly selects an operating point on the trade-off between coverage and statistical separability. Protocols that emphasize open-ended interaction and prompt diversity broaden coverage but induce smaller preference margins, placing evaluations in low-signal regimes where detectability is intrinsically limited. In contrast, curated benchmarks concentrate evaluation effort on prompts that reduce variability and increase per-sample information. Our results suggest that these choices should be made explicitly: different benchmarks may be appropriate at different stages of model development, but detectability ultimately depends on the information induced by the prompt distribution rather than on evaluation scale alone.

\subsection{Using Existing Benchmarks for Budget Planning}
\label{app:benchmark_budget_planning}
Existing benchmarks can substitute for pilot studies when they reflect a comparable evaluation protocol and prompt distribution. In such cases, observed preference margins provide external estimates of effect size that can be used directly for budget planning.

Our results caution, however, that effect sizes measured under curated protocols may not transfer to open-ended evaluation settings (such as Chatbot Arena). Benchmarks such as MT-Bench systematically induce larger margins than benchmarks such as Chatbot Arena by reducing prompt-induced variability through curation and task structure, and thus overestimate detectability when applied to deployment-oriented comparisons. Practitioners should therefore plan budgets conservatively when extrapolating from curated benchmarks to open-ended evaluations.

\subsection{Implications of Allocation Optimality}
We summarize how the allocation results in Section~\ref{sec:protocol-optimality} translate into practical consequences. Under adversarial prompt heterogeneity, no adaptive allocation strategy improves the detectability exponent beyond constant factors. Proportional allocation is minimax-optimal.

When signal is unevenly concentrated across prompt types, adaptive screening and focusing increase the realized KL budget per judgment and yield strictly better error exponents.

\textbf{When two-stage allocation is worthwhile.}
Two-stage allocation is most beneficial when the top $q$ prompt types account for a disproportionate share of total signal energy. Concretely, we say that signal is concentrated if
\[
\sum_{i \in S_q} \hat{\delta}_i^2 \;\ge\; \kappa\, q \sum_{i=1}^m \hat{\delta}_i^2,
\] where $S_q$ indexes the $\lceil qm\rceil$ largest values of $\hat{\delta}_i^2$ and $\kappa > 1$ is a concentration threshold (we use $\kappa \in \{1.5,2\}$).

\section{Robustness Checks and Sensitivity Analyses}
\label{app:robustness}
\subsection{Prompt Reuse Frequency}
\label{app:robustness_prompt_resue}
We first examine prompt reuse at the level of individual model pairs. Using the Arena-provided question ID as a prompt identifier, we find that prompt reuse is essentially nonexistent among well-sampled comparisons. Across all model pairs with at least 200 decisive judgments, over 99.9\% of $(\text{pair}, \text{prompt})$ clusters have size one, with only a single cluster of size two observed in the entire dataset. Thus, almost every judgment for a given model pair corresponds to a unique prompt, precluding substantial prompt-induced correlation.

\subsection{Unique-Prompt Stress Test}
\label{app:robustness_unique_prompt}
As a further robustness check, we recompute preference margins after restricting the data to at most one judgment per $(\text{model pair}, \text{prompt})$, retaining only the first occurrence of each prompt. This restriction leaves the estimated margins virtually unchanged. Across well-sampled pairs, the median absolute change in the estimated margin is zero, and the maximum observed change is $7.3 \times 10^{-4}$, several orders of magnitude smaller than the margins governing detectability. In particular, the prevalence of near-tie comparisons is unaffected by this restriction.

\subsection{Stability of Preference Margins Under Time and Exposure}
Despite their relative rarity, near-tie comparisons are consequential because they dominate evaluation difficulty. Figure~\ref{fig:early_late_margins} shows detectability curves for representative low-margin comparisons obtained by subsampling human judgments and testing $H_0 : p = 0.5$ versus $H_1 : p \neq 0.5$ at $\alpha = 0.05$. In this regime, detectability increases slowly with evaluation budget: even hundreds of judgments are often insufficient to reliably detect a real improvement. For example, comparisons with estimated margins around $|\hat{p} - 0.5| \approx 0.04$ require on the order of $10^3$ independent judgments to achieve 90\% detection probability (via Eq.~\ref{eq:closed_form_budget}), far exceeding typical evaluation budgets.

\textbf{Equal-exposure cohort analysis.}
To further isolate stability from endogenous exposure effects, we condition on an equal-exposure cohort in which each model pair receives exactly 50 early judgments and reaches at least 200 total. Within this controlled cohort, early and final preference margins are strongly correlated (Spearman $\rho = 0.93$), with no evidence of drift toward lower separability as additional judgments accrue. Near-tie comparisons are rare under equal exposure, but all pairs with early $|\hat{p} - 0.5| < 0.10$ remain ambiguous after 200+ judgments, confirming that low-signal comparisons reflect intrinsic preference difficulty rather than artifacts of adaptive sampling.

\begin{figure}[ht]
    \centering
    \includegraphics[width=0.38\textwidth]{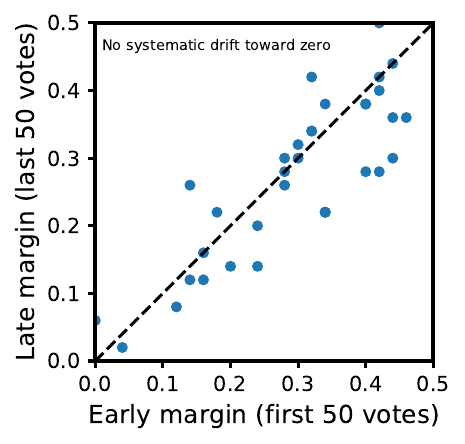}
    \caption{
    Early- versus late-vote preference margin estimates for well-sampled Chatbot Arena model pairs ($\ge200$ decisive judgments). Each point corresponds to a model pair, with margins estimated from the first 50 and last 50 judgments respectively. Margins are largely stable over time, with no systematic drift toward zero, indicating that additional votes primarily reduce estimation variance rather than revealing latent near-ties.}
    \label{fig:early_late_margins}
\end{figure}

\subsection{Uncertainty in Tail Quantiles}

To assess the stability of the margin quantiles reported in Table 1, we compute bootstrap 95\% confidence intervals by resampling model pairs with replacement. Table \ref{tab:tail_ci} reports the results.

\begin{table}[h]
\centering
\caption{Bootstrap 95\% confidence intervals for tail quantiles of $|\delta|$.}
\label{tab:tail_ci}
\begin{tabular}{lcccc}
\toprule
Protocol & Statistic & Point & CI (2.5\%) & CI (97.5\%) \\
\midrule
Arena & $p_{10}(|\delta|)$ & 0.082 & 0.057 & 0.186 \\
Arena & $p_{25}(|\delta|)$ & 0.190 & 0.088 & 0.264 \\
Arena & $p_{50}(|\delta|)$ & 0.321 & 0.216 & 0.371 \\
MT-Bench & $p_{10}(|\delta|)$ & 0.047 & 0.007 & 0.230 \\
MT-Bench & $p_{25}(|\delta|)$ & 0.187 & 0.037 & 0.285 \\
MT-Bench & $p_{50}(|\delta|)$ & 0.285 & 0.181 & 0.403 \\
\bottomrule
\end{tabular}
\end{table}

Uncertainty is largest for extreme tail quantiles, reflecting sensitivity to which model pairs occupy the low-margin regime.

\subsection{Same-Budget Detectability Across Protocols}
\label{app:robustness_same_dudget}
\begin{table}[h]
\centering
\caption{Same-budget detectability across protocols. Detectability is measured by
$z = |\hat p - 0.5| / \sqrt{\hat p(1-\hat p)/n}$ using subsamples of exactly $n$ judgments per pair.
Higher values indicate easier detection.}
\label{tab:same_budget_detectability}
\begin{tabular}{lcccc}
\toprule
Budget $n$ & Protocol & Count & $p_{10}(z)$ & Median$(z)$ \\
\midrule
50  & Arena    & 29{,}200 & 0.85 & 3.87 \\
50  & MT-Bench & 3{,}000  & 1.44 & 6.56 \\
100 & Arena    & 20{,}400 & 1.21 & 6.09 \\
100 & MT-Bench & 3{,}000  & 2.47 & 9.27 \\
\bottomrule
\end{tabular}
\end{table}

MT-Bench stochastically dominates Arena in detectability at matched budgets, consistent with reduced prompt-induced heterogeneity.

\subsection{Tie Frequency and Sensitivity to Tie Handling}
\label{app:ties}

Human preference data frequently includes ties, reflecting judgments where annotators perceive no clear winner. Our main analysis discards ties and conditions on decisive outcomes, corresponding to a \emph{best-case statistical} setting in which each retained judgment provides maximal directional information. Here we examine (i) where ties occur in practice and (ii) how alternative tie encodings affect estimated preference margins and detectability.

\textbf{Where ties occur.}
We analyze tie frequency among well-sampled Chatbot Arena model pairs (at least 200 decisive judgments per pair). The median tie rate across such pairs is $11.4\%$, with a 90th percentile of $22.9\%$. Ties are substantially more frequent in low-signal comparisons: among pairs with decisive-only margins $|\hat{\delta}| \le 0.1$, the median tie rate rises to $21.1\%$. Across well-sampled pairs, tie rate is strongly negatively correlated with estimated margin ($\mathrm{corr} = -0.575$), indicating that ties concentrate precisely in low-signal regimes.

\textbf{Sensitivity to tie encoding.}
We recompute preference margins under three reasonable tie encodings: (i) \emph{drop} ties (main analysis), (ii) \emph{half-vote}, treating ties as $0.5$ votes for each side, and (iii) a \emph{pessimistic} encoding that counts ties against the canonical first model. Including ties systematically reduces estimated margins at the lower tail. For example, the 10th-percentile margin decreases from $0.133$ under the drop-ties encoding to $0.105$ (half-vote) and $0.093$ (worst-case), increasing the implied number of judgments required for $90\%$ detection power from $186$ to $300$ and $385$, respectively.

Conditioning on the near-tie regime ($|\hat{\delta}| \le 0.1$), alternative encodings increase the number of affected model pairs but do not alter the qualitative conclusion: median margins remain small and implied evaluation budgets remain on the order of thousands of judgments across all encodings. Overall, alternative tie treatments do not rescue detectability; if anything, they further tighten feasibility constraints. Discarding ties therefore yields optimistic, best-case estimates of evaluability.

\subsection{Pick-a-Pic: Non-Chat Image Preference Evaluation}
\label{app:pickapic}

To assess whether low-signal regimes are specific to chat-based text generation, we analyze Pick-a-Pic, a large-scale human image preference dataset \citep{Kirstain2023PickAPic}. Pick-a-Pic consists of pairwise human judgments over images generated by different text-to-image models for the same prompt, with repeated annotations enabling estimation of preference margins at the prompt--model-pair level.

\textbf{Dataset.} We use the Pick-a-Pic dataset released by \citet{Kirstain2023PickAPic}, accessed via the Hugging Face mirror \citep{pickapic_hf}, which contains pairwise human image preferences without image payloads. We follow the same analysis pipeline as in the main text. For each prompt and unordered model pair, we aggregate decisive human judgments to estimate a Bernoulli preference parameter $\hat p$ and corresponding margin $|\hat p - 0.5|$. Judgments in which annotators reported no perceived difference between images are discarded, yielding a best-case setting consistent with our treatment of ties elsewhere in the paper.

\begin{figure}[ht]
    \centering
    \includegraphics[width=0.5\linewidth]{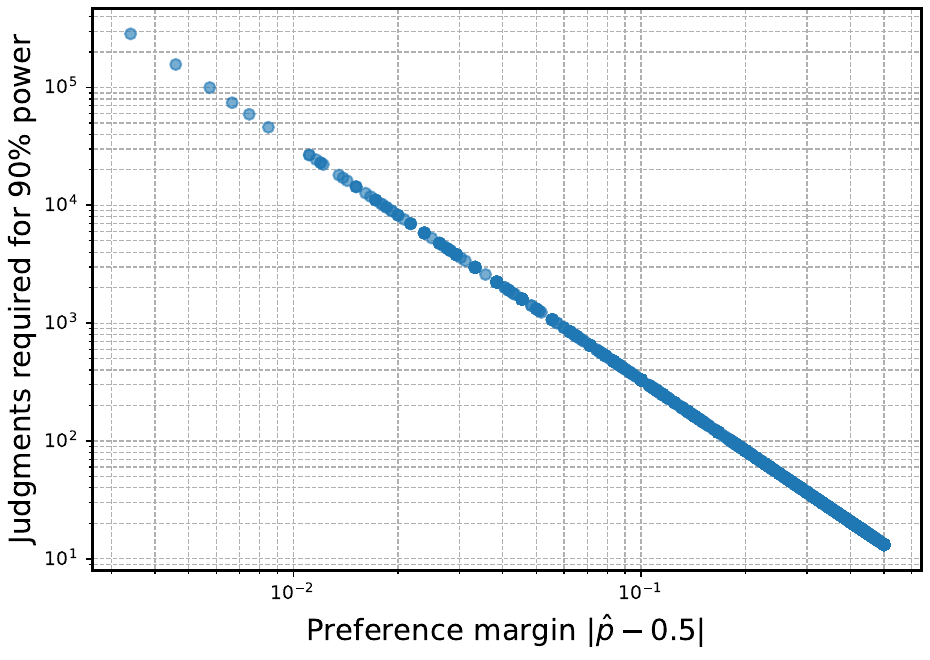}
    \caption{Pick-a-Pic image preference feasibility. Each point corresponds to a prompt--model-pair comparison, plotting the estimated preference margin $|\hat p - 0.5|$ against the implied number of judgments required for $90\%$ detection power.}
    \label{fig:pickapic}
\end{figure}

\textbf{Results.} Among the 5,016 prompt--model-pair comparisons with at least 10 decisive judgments, 710 comparisons (14.2\%) fall into the $|\hat{\delta}| \le 0.10$ regime. Within this near-tie subset, the median estimated margin is 0.045, implying a median evaluation budget of approximately $1.6 \times 10^3$ decisive judgments to achieve 90\% detection power (via Equation~\ref{eq:closed_form_simplified}). Figure~\ref{fig:pickapic} plots the estimated margins against implied budgets for all well-sampled comparisons, showing substantial heterogeneity in required evaluation budgets.

\textbf{Interpretation.} Although image preferences are often considered higher-signal than open-ended text evaluation, Pick-a-Pic exhibits the same qualitative feasibility constraints observed in chat-based benchmarks: a nontrivial fraction of human preference comparisons remain statistically expensive to resolve, even outside conversational or textual evaluation settings. The 14.2\% near-tie prevalence demonstrates that feasibility limits persist across modalities, confirming that these challenges are fundamental to human preference evaluation rather than artifacts of text-based assessment.

\subsection{BigCodeArena: Code Generation with Execution Feedback}
\label{app:bigcodearena}

To assess whether low-signal regimes are specific to subjective text and image 
evaluation, we analyze BigCodeArena \citep{zhuo2025bigcodearena}, a code 
generation evaluation platform with execution feedback. BigCodeArena extends 
Chatbot Arena by enabling on-the-fly code compilation, execution, and interactive 
UI testing. Human evaluators judge pairwise model comparisons based on both 
source code and execution results, which may include interactive applications, 
visualizations, or textual output.

\textbf{Dataset.} We use the publicly released preference dataset containing 
4731 multi-turn conversations with human votes across 10 LLMs spanning 10 
programming languages and 8 execution environments (React, PyGame, Mermaid, etc.). 
Following the filtering criteria in \citet{zhuo2025bigcodearena}, we restrict to 
conversations with at least 2 user-model exchanges and verified code execution. 
For each model pair, we aggregate decisive votes ("Model A Better" or "Model B 
Better," discarding "Tie" and "Both Bad") to estimate the preference margin 
$\hat{\delta} = |\hat{p} - 0.5|$.

\begin{figure}[ht]
\centering
\includegraphics[width=0.5\textwidth]{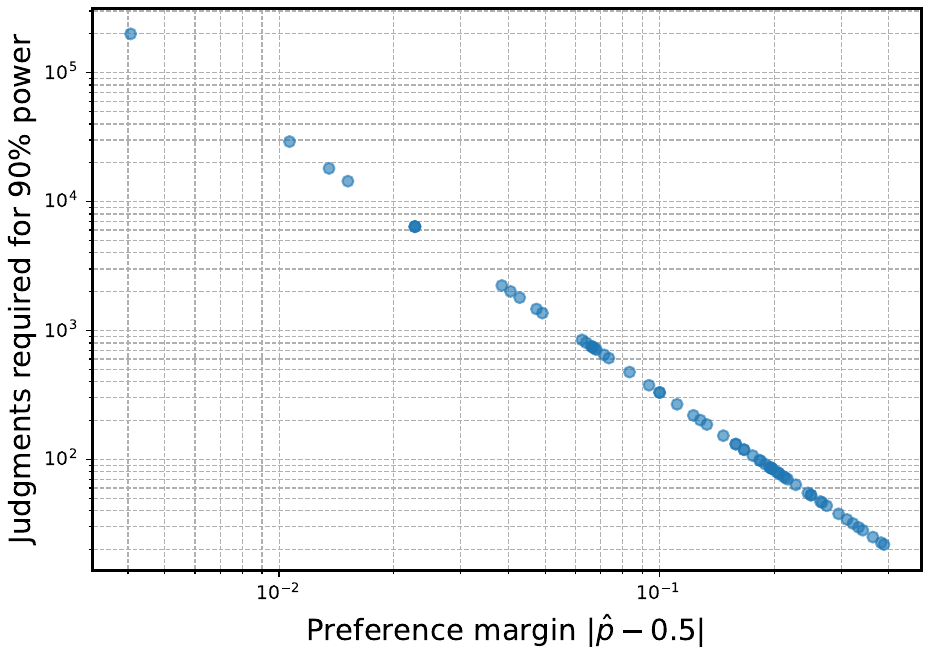}
\caption{BigCodeArena code generation feasibility. Each point corresponds to a 
model pair, plotting the estimated preference margin $|\hat{p} - 0.5|$ against 
the implied number of judgments required for 90\% detection power. Despite 
execution feedback enabling objective correctness verification, 41\% of pairs 
fall into the $|\delta| \leq 0.10$ regime, requiring more than 300 judgments 
for 90\% power at $\alpha = 0.05$.}
\label{fig:bigcodearena}
\end{figure}

\textbf{Results.} Among the 66 model pairs with at least 10 decisive judgments,  27 pairs (41\%) fall into the $|\hat{\delta}| \leq 0.10$ regime. Within this near-tie subset, the median estimated margin is 0.049, implying a median evaluation budget of approximately 1,364 decisive judgments to achieve 90\% detection power (via Equation~\ref{eq:closed_form_simplified}). Figure~\ref{fig:bigcodearena} plots the estimated margins against implied budgets for all well-sampled pairs, showing substantial heterogeneity in required evaluation budgets.

\textbf{Interpretation.} The 41\% near-tie prevalence in BigCodeArena substantially exceeds the 17.6\% observed in Chatbot Arena (Table~\ref{tab:near-tie}), despite the availability of execution feedback. This suggests that code evaluation remains intrinsically difficult even when partial ground truth is available. While execution enables objective verification of functional correctness, human preferences in code generation depend on multiple dimensions—including code readability, maintainability, efficiency, and UI/UX quality—that are not fully captured by execution alone. These findings validate our theoretical framework in a setting where objective and subjective evaluation signals coexist, and underscore that feasibility limits persist across diverse evaluation modalities.

\vspace{2pt}


\end{document}